\documentclass[lettersize,journal]{IEEEtran}
\usepackage{amsmath,amsfonts}
\usepackage{algorithm}
\usepackage{array}
\usepackage[caption=false,font=normalsize,labelfont=sf,textfont=sf]{subfig}
\usepackage{textcomp}
\usepackage{stfloats}
\usepackage{url}
\usepackage{verbatim}
\usepackage{graphicx}
\usepackage{cite}
\usepackage{hyperref}
\usepackage{booktabs}       %
\usepackage{amsfonts}       %
\usepackage{nicefrac}       %
\usepackage{microtype}      %
\usepackage{algorithmicx}
\usepackage{wrapfig}
\usepackage{booktabs}
\usepackage{algpseudocode}
\usepackage{diagbox}
\usepackage{comment}
\usepackage{multirow}
\usepackage{multicol}
\usepackage{subcaption}
\usepackage{caption}
\usepackage{graphicx}
\usepackage{float}
\usepackage{thm-restate}
\hypersetup{colorlinks = true,
            linkcolor = brown,
            urlcolor  = black,
            citecolor = brown,
            anchorcolor = brown}
\usepackage[framemethod=tikz]{mdframed}
\usepackage{lipsum}

\definecolor{mycolor}{rgb}{0.122, 0.435, 0.698}
\usepackage[most]{tcolorbox}

\usepackage{colortbl}
\usepackage{amsmath,amstext,amsfonts,bm,amssymb,mathtools,amsthm}
\usepackage{tcolorbox}
\usepackage{color,xcolor,xspace}
\usepackage{bbding}
\usepackage{pifont}
\usepackage{wasysym}
\usepackage{tikz}

\usepackage[inline]{enumitem}
\theoremstyle{plain}
\newtheorem{theorem}{Theorem}[section]

\newtheorem{lemma}[theorem]{Lemma}

\theoremstyle{definition}

\theoremstyle{remark}
\newtheorem{remark}[theorem]{Remark}
\usepackage[textsize=tiny]{todonotes}

\definecolor{shadecolor}{rgb}{0.92,0.92,0.92}
\usepackage{framed}

\usepackage{enumitem}

\begin{document}

\setlength {\marginparwidth }{2cm} 

\title{ZeroDiff++: Substantial Unseen Visual-semantic Correlation in Zero-shot Learning}

\author{Zihan Ye,~\IEEEmembership{Member,~IEEE,}, \thanks{$^\dagger$: Corresponding author: Ling Shao (ling.shao@ieee.org).\\
Zihan Ye and Ling Shao are with the UCAS-Terminus AI Lab, University of Chinese Academy of Sciences, Beijing 100049, China.\\
Shreyank N Gowda is with the School of Computer Science, the University of Nottingham, NG8 1BB Nottingham, UK. \\
Kaile Du is with the Southeast University, Nanjing 210096, China. \\
Weijian Luo is with the hi-lab of Xiaohongshu Inc, Beijing, China.
}
Shreyank N Gowda,
Kaile Du,
Weijian Luo,
Ling Shao$^\dagger$, ~\IEEEmembership{Fellow,~IEEE}
}

% The paper headers
% \markboth{Journal of \LaTeX\ Class Files,~Vol.~14, No.~8, August~2021}%
% {Shell \MakeLowercase{\textit{et al.}}: A Sample Article Using IEEEtran.cls for IEEE Journals}

% \IEEEpubid{0000--0000/00\$00.00~\copyright~2021 IEEE}
% Remember, if you use this you must call \IEEEpubidadjcol in the second
% column for its text to clear the IEEEpubid mark.

\maketitle

\begin{abstract}
Zero-shot Learning (ZSL) enables classifiers to recognize classes unseen during training, commonly via generative two stage methods: (1) learn visual semantic correlations from seen classes; (2) synthesize unseen class features from semantics to train classifiers. In this paper, we identify spurious visual semantic correlations in existing generative ZSL worsened by scarce seen class samples and introduce two metrics to quantify spuriousness for seen and unseen classes. Furthermore, we point out a more critical bottleneck: existing unadaptive fully noised generators produce features disconnected from real test samples, which also leads to the spurious correlation. To enhance the visual-semantic correlations on both seen and unseen classes, we propose ZeroDiff++, a diffusion-based generative framework. In training, ZeroDiff++ uses (i) diffusion augmentation to produce diverse noised samples, (ii) supervised contrastive (SC) representations for instance level semantics, and (iii) multi view discriminators with Wasserstein mutual learning to assess generated features. At generation time, we introduce (iv) Diffusion-based Test time Adaptation (DiffTTA) to adapt the generator using pseudo label reconstruction, and (v) Diffusion-based Test time Generation (DiffGen) to trace the diffusion denoising path and produce partially synthesized features that connect real and generated data, and mitigates data scarcity further. Extensive experiments on three ZSL benchmarks demonstrate that ZeroDiff++ not only achieves significant improvements over existing ZSL methods but also maintains robust performance even with scarce training data. Code would be available.
\end{abstract}

\begin{IEEEkeywords}
Zero-shot Learning, Generative Model, Diffusion Model, Generative Adversarial Model, Visual-language Learning.
\end{IEEEkeywords}

\section{Introduction}

\begin{figure*}[htbp]
\centering
\includegraphics[width=0.95\linewidth]{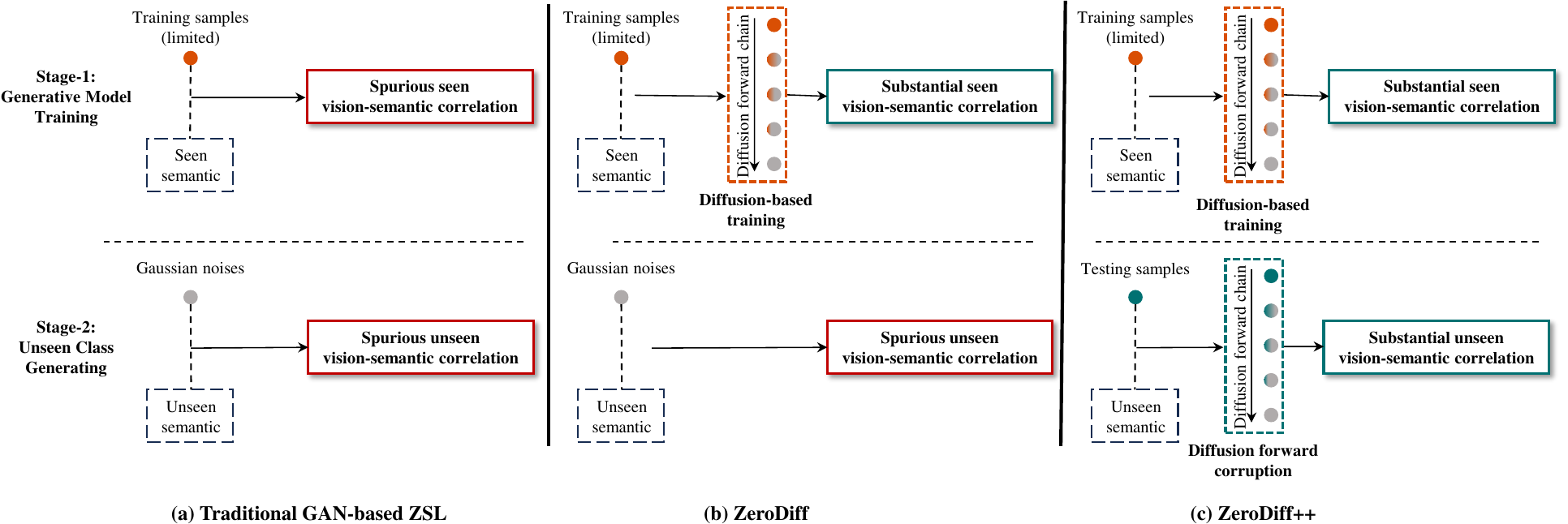}
\caption{Overall motivation illustration. (a) Traditional GAN-based ZSL methods suffer from spurious visual-semantic correlation both for seen and unseen classes. (b) ZeroDiff~\cite{ye2025zerodiff} employs a diffusion mechanism on the training stage to obtain a substantial correlation on seen classes. (c) Our ZeroDiff++ further explores the diffusion forward chain for utilizing real testing samples, encouraging substantial visual-semantic correlation on unseen classes.
}
\label{fig:banner}
\end{figure*}

\begin{figure*}
\centering
\includegraphics[width=0.95\linewidth]{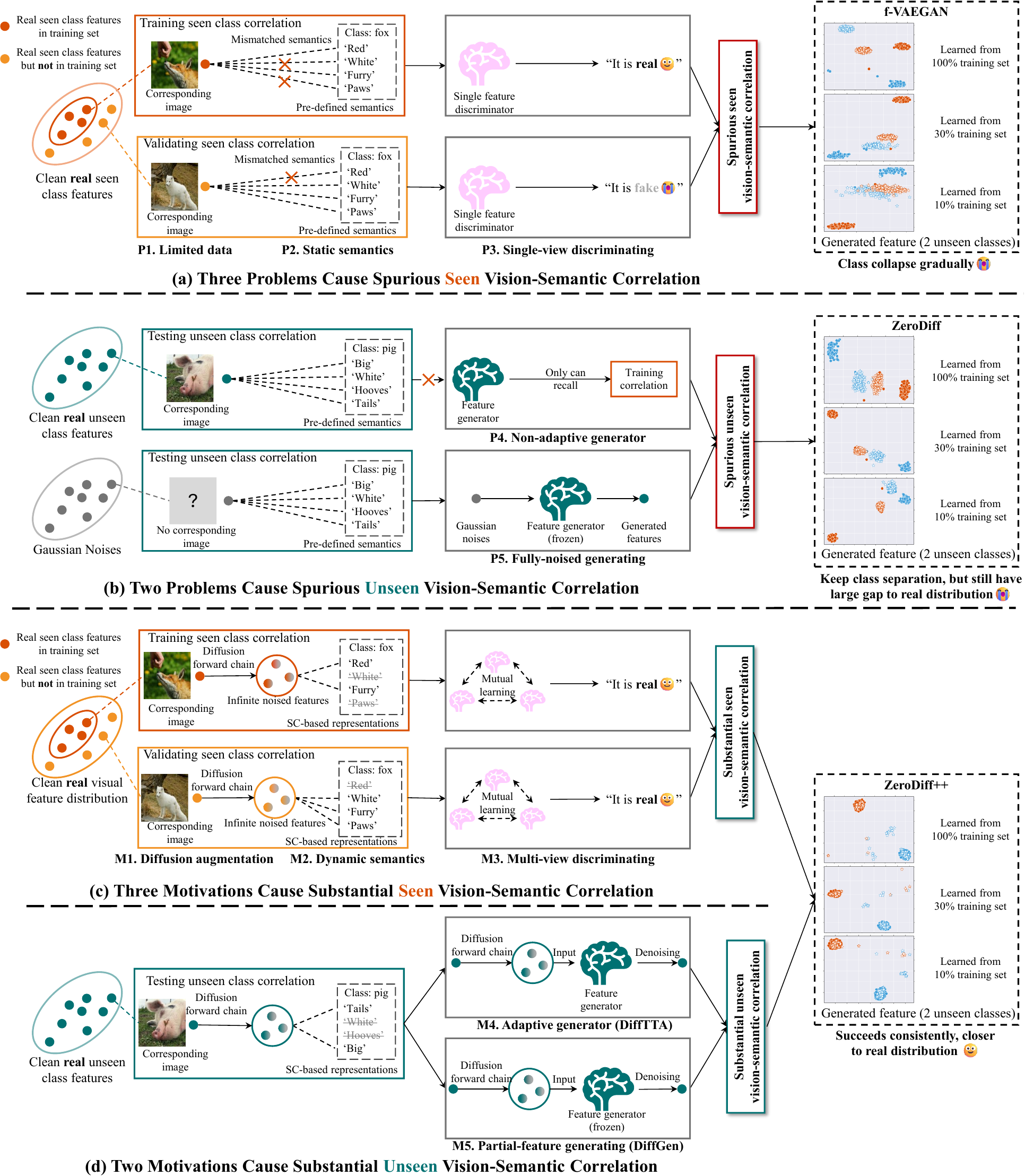}
\caption{Detailed motivation Illustration.
(a) In the training stage, three problems lead to the standard GAN-based ZSL approaches obtaining spurious seen correlation: over-fitting to limited data, mismatched static pre-defined semantics, and single-view discriminating.
(b) In the generating stage, two problems corrupt unseen correlation: an unadaptive generator and fully-noised generation.
Finally, spurious seen and unseen correlations lead to feature generation failing gradually.
(c) In contrast, our ZeroDiff++ overcomes these shortcomings using three motivations in the training stage for seen correlation: diffusion-augmented infinite features, dynamic SC-based representations, and multi-view discriminating.
(d) Two motivations in the generating stage are for unseen correlation: diffusion-based test-time adaptation and generating.
Finally, substantial correlations on both seen and unseen classes allow ZeroDiff++ to keep a robust performance with even 10\% training set.
}
\label{fig:motivation}
\end{figure*}

With the rapid development of machine learning, image recognition models have achieved unprecedented success, largely due to the availability of abundant labeled samples.
However, collecting extensive labeled datasets is often time-consuming and expensive, making it unrealistic to assume access to substantial volumes of labeled data.
To improve data efficiency for new classes, zero-shot learning (ZSL)~\cite{xian2018zero} offers a promising solution by transferring knowledge from seen classes to unseen classes through the use of pre-defined class semantic knowledge, such as attributes~\cite{chao2016empirical} and text-based representations~\cite{zhu2018generative, chen2021text}.

Conventional ZSL methods usually learn a visual-to-semantic embedding~\cite{fu2017zero}, a semantic-to-visual embedding~\cite{Fei_2021_ICCV}, or an unified embedding space ~\cite{Zhang_2015_ICCV}.
However, since only training samples of the seen class are available, such embedding methods inevitably tend to lean towards the seen class.
Under the more challenging Generalized ZSL (GZSL) task~\cite{chao2016empirical}, in which the model needs to classify samples of both seen and unseen classes simultaneously, the performance of the embedding method deteriorates.
This will make the embedding ZSL methods prone to misclassifying the unseen class images into seen classes and reduce the overall accuracy.

In contrast, to alleviate the biased prediction issue mentioned above, recent works try to utilize generative models to synthesize pseudo features of unseen classes, hence known as generative ZSL methods.
These methods typically leverage some form of adaptations of generative adversarial networks (GANs)~\cite{han2021contrastive,gowda2023synthetic, hou2024visual} to aid the feature generation.
Such a generic two-stage framework are wide-used:
(S-1) \textbf{Generative Model Training:} Training generative models to synthesize visual features from class semantics (i.e., the correlations between visual features and class semantics) on seen classes; (S-2) \textbf{Unseen Class Generating:} Then, freezing generative models to generate visual features of unseen classes by their semantics, and training an unseen class classifier by generated unseen class features.

In this paper, despite fast advances in ZSL, we point out that existing generative ZSL methods are limited by \textbf{spurious correlations on both seen and unseen classes}, as shown in Fig.~\ref{fig:banner} (a).
Specifically, for seen correlations, existing generative methods generally presume that there is a substantial number of samples available for seen classes to train generative models.
Only a few ZSL methods consider the data-limited ZSL~\cite{verma2020meta, tang2024data}.
Through empirical investigations, we observe that most existing generative ZSL methods suffer from performance degradation and collapsed generation modes as the number of training samples is gradually reduced.
A more detailed illustration is in Fig.~\ref{fig:motivation}(a). We re-split the real seen class samples into two groups: one for training and the other for validating the correlation. We then feed the visual features of the split samples into the GAN discriminator to obtain critic scores (i.e., the output of the discriminator), which indicate whether the discriminator perceives the input features as real or fake with respect to the class semantics. Next, we calculate the difference in critic scores as a measure to determine whether the learned visual-semantic correlation is spurious or substantial. Our observations reveal  that as the number of training samples decreases, the critic score difference becomes increasingly larger, suggesting that the generative models perceive the validating seen class samples as progressively more `fake.' This phenomenon indicates that a limited number of training samples amplifies the spurious seen visual-semantic correlation.

On the other hand, in the generating stage, we can similarly use the discrimination gap to indicate the unseen correlation. However, since only the generator is involved in this stage, the significance of the discrimination gap is reduced.
Thus, we further point out that a more critical factor is the lack of correlation between the generated samples and the real test samples, as shown in Figure~\ref{fig:motivation}(b). One reason is that the generator is only trained on seen classes and therefore cannot adapt to unseen classes.
Another reason is that existing methods typically sample noise from a Gaussian distribution and concatenate the sampled noise with the semantics of unseen classes to generate features for unseen classes.
This generation phase, which uses the semantics of unseen classes as a generation condition, can generate unseen class features with good separability.
However, good class separability does not guarantee that the generated features are consistent with the real features.
Due to the lack of correlation between real and generated features, the generated features may still be inconsistent with the semantic information of unseen classes.

To this end, we propose a novel ZSL framework mainly based on diffusion, which is named \textbf{ZeroDiff++}, for strengthening both seen and unseen vision-semantic correlation.
An overall illustration is in Fig.~\ref{fig:banner} (c).
Specifically, for seen correlation, three key insights are proposed onthe  training stage (Fig.~\ref{fig:motivation}(c)):
(1) \textbf{Diffusion augmentation}: Limited training samples can be easily memorized by models. We incorporate the diffusion mechanism~\cite{song2020denoising, wang2023patch} into our method, which allows a single clean sample to be augmented into an infinite number of noised samples by varying the noise-to-data ratios.
(2) \textbf{Dynamic semantics}: Predefined semantics are static, meaning that class semantics remain the same across different instances. However, each limited sample may only represent part of the predefined semantics. For example, in the AWA2 dataset, all images of the `fox' class are labeled with the semantics `red' and `white', even though this is inaccurate for white foxes. To address this, we revisit the classical Supervised Contrastive (SC) loss~\cite{khosla2020supervised} and suggest that SC-based representations can generate instance-level semantics for every sample, enhancing the generation of visual features.
(3) \textbf{Multi-view discriminating}: We combine three types of discriminators to assess the authenticity of generated features from different perspectives: predefined semantics, the diffusion process, and SC-based representations. To integrate knowledge from all discriminators, we propose a mutual learning loss based on the Wasserstein distance, further reinforcing substantial correlations.

Next, for enhancing the correlation on unseen classes, we propose two key insights on the generating stage (Fig.~\ref{fig:motivation}(d)):
(4) \textbf{Diffusion-based Test-time Adaption (DiffTTA)}: Since the generator only encountered seen correlations during the training stage, its generation of unseen features is inaccurate. At the unseen class generating stage, we use a pre-classifier to obtain the pseudo labels of testing samples, and propose a test-time adaptation loss by feature reconstruction for stable adaptation.
(5) \textbf{Diffusion-based Test-time Generation (DiffGen)}: To further enhance the connection between real and generated unseen class features, we utilize the trained diffusion-based generator to trace the diffuse-denoise path of unseen class features with different noise ratios.
Since the perturbed noises varied from fully noise to near-zero noise, the produced features can be seen as fully synthesized ($t=T$), partially synthesized ($0<t<T$), or fully reconstructed ($t=0$).
Surprisingly, we find that partially synthesized features of unseen classes not only mitigate data scarcity further, but also allow us trace every synthesized feature to corresponding real features.
Extensive experiments showcase that ZeroDiff++ achieves the new state-of-the-art on three ZSL benchmarks with various scales of training set.

A preliminary version of this work was presented at a conference, referred to as ZeorDiff~\cite{ye2025zerodiff} (cf Fig.~\ref{fig:banner}(b)).
 \textbf{In this paper, we further extend the robust visual-semantic correlation from seen classes to unseen classes}. We summarize our extensions in four points:
i) We propose DiffTTA with pseudo labels to adapt the generator by feature reconstruction.
ii) We propose DiffGen to partially synthesize unseen class features, allowing us to trace the feature generation process and further mitigate the data-scarce ZSL.
iii) More importantly, although previous work~\cite{xiao2022tackling,wang2022diffusion} also discussed that diffusion-based discriminators can alleviate the overfitting of GAN, this work is the first to theoretically prove it.
iv) We also conduct a fair comparison with the previous diffusion-based ZSL method diffusionZSL~\cite{li2023your} that directly classifies samples by the reconstruction errors of a trained diffusion generator. The experimental results in Sec. ~\ref{sec:comparison_inference} show that this method still exhibits significant classification bias and is not better than traditional generative zero-shot learning methods.

To summarize, our contributions are as follows:
\begin{itemize}
        \item We reveal and quantify the spurious visual-semantic correlation problem, and empirically demonstrate that the problem would be amplified by fewer training samples, and propose a novel generative ZSL framework, ZeroDiff++, that advances more efficient ZSL in scenarios with limited seen class samples.
        
        \item  We propose three train-time insights to strengthen the seen visual-semantic correlation, which are diffusion augmentation, dynamic representation of limited samples, and multi-view discrimination.
        
        \item We also propose two test-time insights for unseen correlation enhancement, which are DiffTTA that reconstructs unseen features to adapt and DiffGen that injects partially real features into fake features with different diffusion ratios.

        \item We theoretically demonstrated for the first time that diffusion-based discriminators can alleviate overfitting of GANs.
        
        \item We introduce a new protocol to evaluate generative ZSL methods under varying data conditions. Experimental results show that ZeroDiff++ outperforms various generative models across different amounts of training samples.
    \end{itemize}

\section{Related Work}

ZSL is a research area focused on class-level generalizability~\cite{xian2018zero}. The primary approaches to ZSL can be categorized into embedding and generative methods. Embedding methods~\cite{akata2015evaluation, yang2016zero, ding2017low, chen2022transzero, ye2023rebalanced} learn a direct mapping from visual to semantic spaces or vice versa. For example, TransZero++~\cite{chen2022transzero++} presents a cross-modal transformer-based architecture to ZSL. However, when training samples are limited, embedding methods often underperform compared to generative methods on smaller datasets\cite{chen2022transzero, ye2023rebalanced}. In contrast, generative ZSL methods use various generative models~\cite{martin2017wasserstein} to synthesize visual features for unseen classes and then train a final classifier for these classes.

This paper distinguishes itself from previous works in five key aspects: 
(1) Recent generative ZSL methods have also identified incorrect visual-semantic correlations~\cite{ye2021disentangling, chen2024causal}, but they lack in-depth quantitative analysis. In contrast, we verify this issue using discriminator scores and demonstrate that the problem is exacerbated by a limited number of training samples.
(2) Some ZSL works~\cite{xian2019fvaegan,narayan2020latent, wang2023bi} combine GANs and VAEs to address the well-known mode collapse issue~\cite{luo2024dyngan}. However, we show that VAEGAN-based methods still experience mode collapse when the training set is reduced. As a solution, we propose integrating the diffusion mechanism~\cite{ho2020denoising} to mitigate this problem and a Wasserstein-distance-based mutual learning approach to distill knowledge across multiple discriminators.
(3) To address the limited discriminative power of predefined semantics, recent works propose dynamically updating these semantics, like DSP~\cite{chen2023evolving} and VADS~\cite{hou2024visual}. Our approach revisits the classical SC learning~\cite{khosla2020supervised} and argues that SC-based representations can serve as a new source for instance-level semantics due to their high intra-class variation~\cite{islam2021broad}.
(4) As far as we know, this work is the first to explore the techniques on test-time among generative ZSL methods. Our proposed DiffTTA and DiffGen could reinforce the unseen visual-semantic correlation, further mitigating the scarce training sample problem.
(5) Recent studies~\cite{clark2024text, li2023your, rombach2022high} have shown that specific large-scale diffusion models, like Stable Diffusion, also possess zero-shot classification abilities. For example, diffusionZSL~\cite{li2023your} reconstructs unseen images by using the semantics of all categories and selects the category with the smallest error to zero-shot classify. However, they do not strictly ensure that unseen classes are excluded from training and rely on large-scale models with huge parameters and extensive training sets. In contrast, our interpretation of diffusion' stays true to its core principle: a generative paradigm that learns data distributions by denoising noised data.
More importantly, our method \textbf{does not} violate the ZSL premise~\cite{xian2018zero}. Our experiment also demonstrated that ZeroDiff++ has better performance in unseen category classification, while diffusionZSL is inferior to traditional ZSL inference and still suffers from severe seen-unseen bias.

%\textbf{Data-efficient Generative Model.}
%Exploring data-efficient generative models is crucial, as the success of generative methods often depends on collecting a vast amount of diverse training samples, which is both costly and challenging~\cite{webster2019detecting, saha2022ganorcon}. In previous works, DiffAugment~\cite{zhao2020differentiable} introduced differentiable augmentation for GANs and successfully trained with only 10\% of the data. AdvAug~\cite{chen2021data} demonstrated that a specific GAN architecture could reduce the amount of required training data using the lottery ticket hypothesis. PatchDiff~\cite{wang2023patch} developed a new conditional score function at the patch level. Denoising Diffusion GAN (DDGAN)~\cite{xiao2022tackling} suggested that combining diffusion models with GANs could reduce overfitting, though without sufficient empirical validation. Generative ZSL methods can also be seen as a form of data-efficient generative models, as they eliminate the need to collect data or train models for unseen classes.

\section{ZeroDiff++}
We first build the notation system of the ZSL problem in Sec.~\ref{sec:notations}. Then, we present the spurious vision-semantic correlation in the existing method and our proposed metric in Sec.~\ref{sec:SpuriousVSC}. To solidify the visual-semantic correlation, we introduce the training stage of our ZeroDiff++ in Sec.~\ref{sec:zerodiff++_training}, including the three train-time insights and the generating stage in Sec.~\ref{sec:zerodiff++_generating} with two test-time insights.

\subsection{Notations}
\label{sec:notations}
In the ZSL setting, there are two disjoint label sets: a seen set \( \mathcal{Y}^{s} \) used for training and an unseen set \( \mathcal{Y}^{u} \) used for testing, where \( \mathcal{Y}^{s} \cap \mathcal{Y}^{u} = \emptyset \). The training dataset is denoted as \( \mathcal{D}^{tr} = \{(\mathbf{x}^{s}, y^{s}, \mathbf{a}^{s}) \mid \mathbf{x}^{s} \in \mathcal{X}^{s}, y^{s} \in \mathcal{Y}^{s}, \mathbf{a}^{s} \in \mathcal{A}^{s} \} \), where \( \mathcal{X}^{s} \), \( \mathcal{A}^{s} \), and \( \mathcal{Y}^{s} \) represent the image, semantic, and label spaces for the seen classes. The goal of ZSL is to use the training dataset \( \mathcal{D}^{tr} \) to create a classifier that can classify unseen images in the testing dataset \( \mathcal{D}^{te} = \mathcal{D}^{u} = \{(\mathbf{x}^{u}, y^{u}, \mathbf{a}^{u}) \mid x^{u} \in \mathcal{X}^{u}, y^{u} \in \mathcal{Y}^{u}, \mathbf{a}^{u} \in \mathcal{A}^{u} \} \), i.e., \( f_{zsl}: \mathcal{X}^{u} \to \mathcal{Y}^{u} \). In the Generalized ZSL (GZSL) task, images from seen classes must also be classified during testing. Therefore, a portion of the seen class samples is reserved for testing, denoted as $\mathcal{D}^{te,s}$. In other words, the testing dataset becomes \( \mathcal{D}^{te} = \mathcal{D}^{te,s} \cup \mathcal{D}^{u} \), and the goal becomes \( f_{gzsl}: \mathcal{X}^{s} \cup \mathcal{X}^{u} \to \mathcal{Y}^{u} \cup \mathcal{Y}^{s} \).

\subsection{Spurious Vision-Semantic Correlation}
\label{sec:SpuriousVSC}

\begin{figure}
\centering
\includegraphics[width=\linewidth]{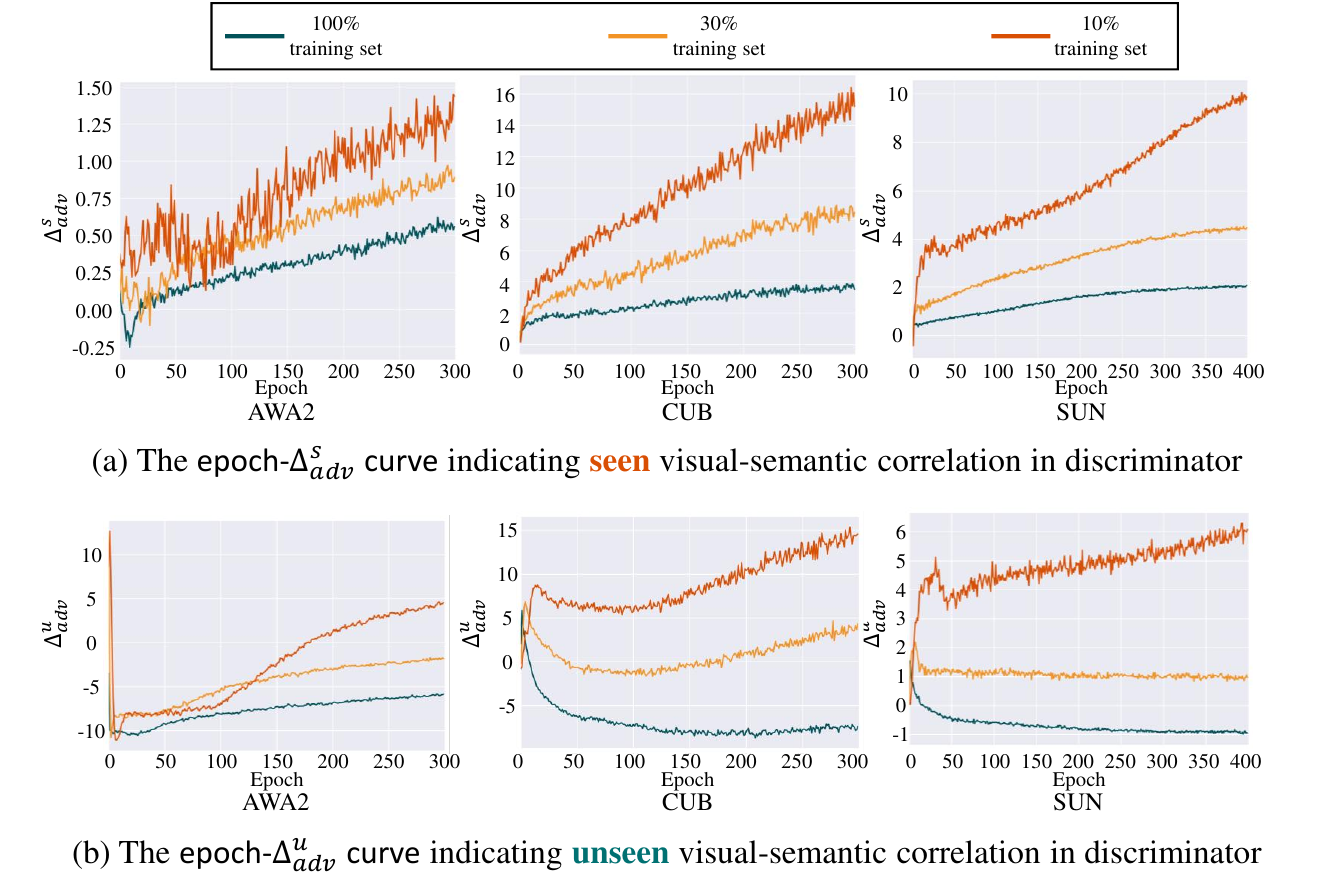}
\caption{The epoch-$\Delta^s_{adv}$ and epoch-$\Delta^u_{adv}$ curves of the classical f-VAEGAN~\cite{xian2018feature}. Larger $\Delta^s_{adv}$ indicates $D_{adv}$ thinks real testing seen examples are more fake, i.e., learns more spurious visual-semantic correlation on seen classes. Similarly, the epoch-$\Delta^u_{adv}$ curve reveals the learned unseen correlation of $D_{adv}$.
}
\label{fig:VSC}
\end{figure}

Given an image \( \mathbf{x}^{s} \) from the training set \( \mathcal{D}^{tr} \), existing GAN-based ZSL works use a feature extractor \( F^*_{ce} \) pre-trained by Cross-Entropy (CE) loss to extract its visual features \( \mathbf{v}^{s}_{0} = F^*_{ce}(\mathbf{x}^{s}) \): $\mathcal{L}_{CE} = - \sum^{|\mathcal{Y}|}_{i=1} \mathbf{y}_{i} \log(\mathbf{\widehat{y}_i})$.
Then, a feature generator $G$ takes class semantics $\mathbf{a}^{s}$ and latent variables $\mathbf{z}$ as inputs to synthesize class-specific sample features \( \tilde{\mathbf{v}}^{s}_{0} = G_{adv}(\mathbf{a}^{s}, \mathbf{z}) \); and a discriminator $D_{adv}$ is used to distinguish real features $\mathbf{v}^{s}_{0}$ from fake features $\tilde{\mathbf{v}}^{s}_{0}$ by predicting the Wasserstein distance $W_{adv}$ between them according to predefined semantics $\mathbf{a}^{s}$.
Specifically, take the baseline f-VAEGAN as an example, its $D_{adv}$ maximizes the following loss $\mathcal{L}_{adv}$:
\begin{align}
    \label{eq:D_adv}
    \mathcal{L}_{adv}  = W_{adv} - \lambda_{gpadv} \mathcal{L}_{gpadv},
\end{align}
\begin{align}
    \label{eq:w_adv}
    W_{adv} & = \mathbb{E}[D_{adv}(\mathbf{v}^{s}_0,\mathbf{a}^{s})] - \mathbb{E}[D_{adv}(\tilde{\mathbf{v}}^{s}_{0},\mathbf{a}^{s})],
\end{align}
\begin{align}
    \label{eq:gpadv}
    \mathcal{L}_{gpadv} = \mathbb{E}[(\|\nabla_{\hat{\mathbf{v}}^{s}_{0}} D_{adv}(\hat{\mathbf{v}}^{s}_{0},\mathbf{a}^{s})\|_{2} - 1)^{2}],
\end{align}
where \( \hat{\mathbf{v}}^{s}_{0} = \alpha \mathbf{v}^{s}_{0} + (1-\alpha) \tilde{\mathbf{v}}^{s}_{0} \) with \( \alpha \sim U(0,1) \), and \( \lambda_{gpadv} \) is a coefficient of the gradient penalty term \( \mathcal{L}_{gpadv} \)~\cite{gulrajani2017improved} that aims to stabilize the training of GANs.

Since a higher critic score means that discriminators consider the input data more `real', we take the critic score difference  between real training features \( \mathbf{v}^{s}_{0} \) and testing seen class features \( \mathbf{v}^{te,s}_{0} \) to indicate whether the learned seen vision-semantic correlation is spurious or substantial, i.e.,
\begin{align}
    \label{eq:delta_adv_seen}
    \Delta^s_{adv}(\mathbf{v}^{s}_{0}, \mathbf{v}^{te,s}_{0}) &= D_{adv}(\mathbf{v}^{s}_{0},\mathbf{a}^{s}) - D_{adv}(\mathbf{v}^{te,s}_{0},\mathbf{a}^{te,s}).
\end{align}
Similarly, we can count the critic score difference  between real training features \( \mathbf{v}^{s}_{0} \) and testing unseen class features \( \mathbf{v}^{u}_{0} \) to indicate the unseen correlation is spurious or substantial, i.e.,
\begin{align}
    \label{eq:delta_adv_unseen}
    \Delta^u_{adv}(\mathbf{v}^{s}_{0}, \mathbf{v}^{te,s}_{0}) &= D_{adv}(\mathbf{v}^{s}_{0},\mathbf{a}^{s}) - D_{adv}(\mathbf{v}^{u}_{0},\mathbf{a}^{u}).
\end{align}

Taking this metric, we display the results of f-VAEGAN in Fig.~\ref{fig:VSC}.
We can find $\Delta^s_{adv}$ and $\Delta^u_{adv}$ continuously increase, and the difference in critic score becomes more significant on smaller training sets, showing that the learned both seen and unseen vision-semantic correlations are spurious; this explains why the performances of the generative ZSL methods could drop when the training set shrinks.

\subsection{Training Stage of ZeroDiff++}
\label{sec:zerodiff++_training}

\begin{figure*}
    \centering
    \includegraphics[width=\linewidth]{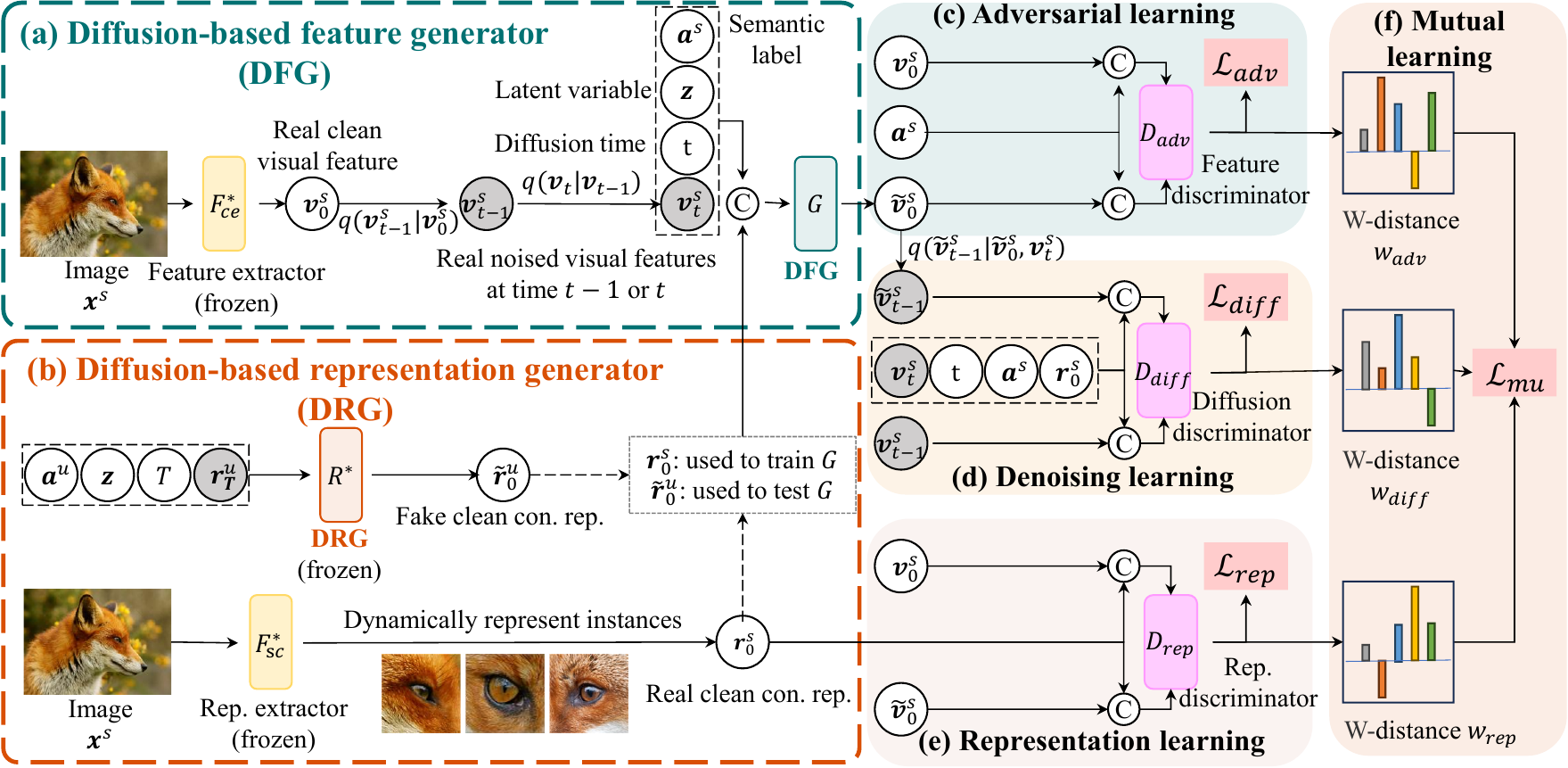}
    \caption{The training stage of our ZeroDiff++. \textcircled{c} represents the concatenation operation. Given frozen extractors \( F^{*}_{ce} \) and \( F^{*}_{sc} \), we take them to extract clean visual features \( \mathbf{v}_{0} \) and contrastive representations \( \mathbf{r}_{0} \). Then, we use the diffusion forward chain (Eq.~\ref{eq:diff_forward_chain}) to obtain real noised visual features \( \mathbf{v}_{t-1} \) and \( \mathbf{v}_{t} \). Next, \(G\) in DFG denoises/generates a fake clean feature \( \tilde{\mathbf{v}}_{0} \), conditioned by the concatenation of the semantic label \( \mathbf{a} \), latent variable \( \mathbf{z} \), diffusion time \( t \), noised feature \( \mathbf{v}_{t} \), and SC-based representation \( \mathbf{r}_{0} \). The fake clean feature is evaluated from three different learning perspectives: adversarial learning (\emph{Does it match predefined semantics?}), denoising learning (\emph{ Does it match diffusion processes?}), and representation learning (\emph{Does it match contrastive representations?}). Finally, we present the mutual learning loss \( \mathcal{L}_{mu} \) to integrate knowledge of all discriminators.}
    \label{fig:zerodiff}
\end{figure*}

We now gradually introduce our proposed components in the training stage that are motivated by our three insights and corresponding targeted problems. These components are integrated into the baseline f-VAEGAN, resulting in the proposed ZeroDiff++.

\subsubsection{\textbf{Key1: Diffusion Augmentation}}
\label{sec:diffAugTrain}

\begin{figure*}
\centering
\includegraphics[width=\linewidth]{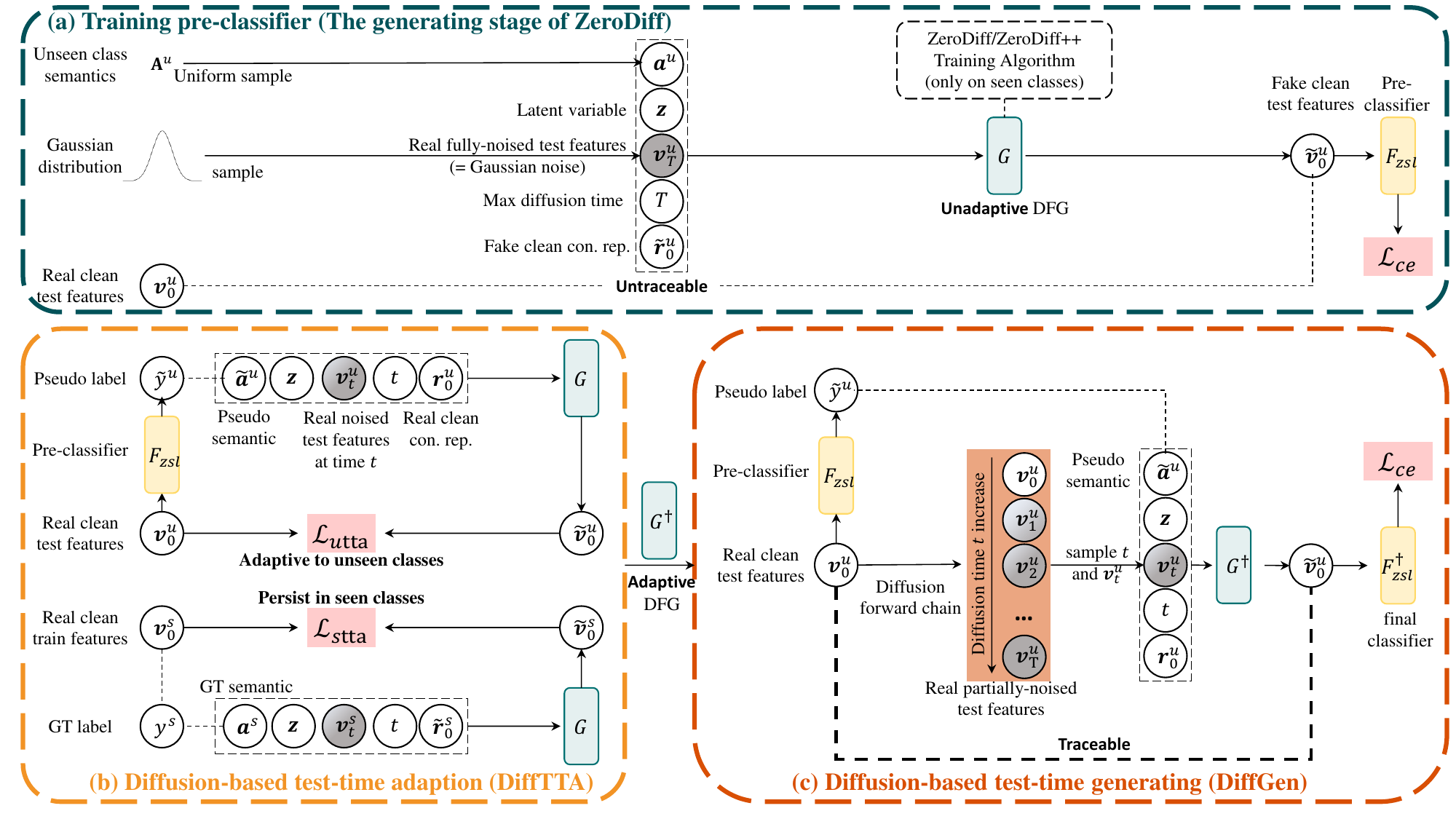}
\caption{The generating stage of our ZeroDiff++. In the generating stage, Zerodiff++ firstly trains a pre-classifier $F_{zsl}$ by unadaptive DFG. Next, it adapts unseen classes by the $\mathcal{L}_{utta}$ loss to minimize the denoising error between real test features $\mathbf{v}^{u}_{0}$ and fake test features $\tilde{\mathbf{v}}^{u}_{0}$ conditioned by the pseudo semantics from pre-classifier. Besides, it also uses the $\mathcal{L}_{stta}$ to prevent forgetting seen classes, leading to the adaptive DFG $G^{\dagger}$. Finally, it samples the partially-noised test features $\mathbf{v}^{u}_{t}$ from the diffusion forward chain, generates the partially-denoised test features $\tilde{\mathbf{v}}^{u}_{t}$, and training the final ZSL classifier $F^{\dagger}_{zsl}$.}
\label{fig:testTime}
\end{figure*}

Following traditional GAN-based methods, we also adapt $D_{adv}$ and the traditional adversarial loss (Eq.~\ref{eq:D_adv}) to determine whether the generated features align with the predefined semantics, as shown in Fig.~\ref{fig:zerodiff}(c).
However, one of the causes of spurious visual-semantic correlation is \textbf{Discriminator Overfitting}, i.e., limited training sets are memorized by $D_{adv}$.
To this end, the first key insight is to leverage the diffusion mechanism to augment the limited training set into infinite noisy data to mitigate the overfitting.
Specifically, we propose our Diffusion-based Feature Generator (DFG) $G$, as shown in Fig.~\ref{fig:zerodiff} (a).

\textbf{Diffusion-based Generating:}
Given an image \( \mathbf{x}^{s} \) from the training set \( \mathcal{D}^{tr} \), we use \( F^*_{ce} \) and  \( F^*_{sc} \) to extract its clean visual features \( \mathbf{v}^{s}_{0} = F^*_{ce}(\mathbf{x}^{s}) \) and clean contrastive representation \( \mathbf{r}^{s}_{0} = F^*_{sc}(\mathbf{x}^{s}) \) (The \( F^*_{sc} \) and \(\mathbf{r}^{s}_{0}\) are introduced in the Sec.~\ref{sec:SCRep}).
Following previous diffusion models~\cite{ho2020denoising, xiao2022tackling}, we apply the diffusion noising process to generate infinite data at various noise levels, from weak to strong, according to the diffusion forward chain:
\begin{equation}
\label{eq:diff_forward_chain}
q(\mathbf{v}^{s}_{1:T}|\mathbf{v}^{s}_0) = \prod_{t\geq1} q(\mathbf{v}^{s}_{t}|\mathbf{v}^{s}_{t-1}),
\end{equation}
\begin{equation}
\label{eq:diff_forward_chain_step}
q(\mathbf{v}^{s}_t|\mathbf{v}^{s}_{t-1}) = \mathcal{N}(\mathbf{v}^{s}_{t}; \sqrt{1-\beta_{t}}\mathbf{v}^{s}_{t-1}, \beta_{t} \bm{I}),
\end{equation}
where $\beta_{t}$ is a pre-defined variance schedule.
When $t = T$, the noised feature $\mathbf{v}^{s}_{t}$ become fully Gaussian noise.
We set the maximum diffusion time as \( T \) and randomly sample a diffusion time \( t \sim U(1,\cdots,T) \). We denote the noised visual features as \( \mathbf{v}^{s}_{t-1} \) and \( \mathbf{v}^{s}_{t} \) at the diffusion times \( t-1 \) and \( t \), respectively. Next, to model the denoising process, we concatenate the noised feature \( \mathbf{v}^{s}_{t} \), diffusion time \( t \), predefined class semantics \( \mathbf{a}^{s} \), latent variable \( \mathbf{z} \), and contrastive representation \( \mathbf{r}_{0} \) as the input of \( G \), i.e. \( \tilde{\mathbf{v}}^{s}_{0} = G(\mathbf{a}^{s}, \mathbf{r}^{s}_{0}, t, \mathbf{v}^{s}_{t}, \mathbf{z}) \). 
After denoising/generating a clean feature \( \tilde{\mathbf{v}}^{s}_{0} \), it is evaluated from three aspects: adversarial learning, denoising learning, and representation learning.

\textbf{Diffusion-based Discriminating:}
To evaluate whether the denoised/generated clean features align with the diffusion processes, we design the diffusion discriminator \( D_{diff} \), as illustrated in Fig.~\ref{fig:zerodiff}(d).
\( D_{diff} \) needs to approximate the true denoising distribution \( q(\mathbf{v}^{s}_{t-1}|\mathbf{v}^{s}_{t}) \). To this end, we posterior-sample the real noised visual feature $\mathbf{v}^{s}_{t-1}$ as well as the fake noised visual feature $\tilde{\mathbf{v}}^{s}_{t-1}$:
\begin{equation}
\label{eq:renoise}
\tilde{\mathbf{v}}^{s}_{t-1} \sim q(\tilde{\mathbf{v}^{s}}_{t-1}|\tilde{\mathbf{v}}^{s}_{0},\mathbf{v}^{s}_{t}) = \mathcal{N}(\tilde{\mathbf{v}}^{s}_{t-1}; \tilde{\mu}_t(\mathbf{v}^{s}_{t},\tilde{\mathbf{v}}^{s}_{0}),\tilde{\beta}_{t} \bm{I}),
\end{equation}
\begin{align}
\label{eq:renoise_mu}
\tilde{\mu}_t(\mathbf{v}^{s}_{t},\mathbf{v}^{s}_{0}) = \frac{\sqrt{\bar{\alpha}_{t-1}}\beta_{t}}{1-\bar{\alpha}_t} \mathbf{v}^{s}_{0} + \frac{\sqrt{\alpha_{t}}(1-\bar{\alpha}_{t-1})}{1-\bar{\alpha}_t} \mathbf{v}^{s}_{t},
\end{align}
where $\tilde{\beta}_{t} = \frac{1- \bar{\alpha}_{t-1}}{1-\bar{\alpha}_t} \beta_{t}$ and $\bar{\alpha}_t = \prod^{t}_{j=1} (1-\beta_{j})$.
Then, we train \( D_{diff} \) to learn the Wasserstein distance between them:
\begin{align}
    \label{eq:D_diff}
    \mathcal{L}_{diff}  = W_{diff} - \lambda_{gpdiff} \mathcal{L}_{gpdiff},
\end{align}
\begin{align}
\label{eq:w_diff}
W_{diff} &= \mathbb{E}[D_{diff}(\mathbf{v}^{s}_{t-1},\mathbf{v}^{s}_{t},\mathbf{r}^{s}_{0},\mathbf{a}^{s},t)] \\
& - \mathbb{E}[D_{diff}(\tilde{\mathbf{v}}^{s}_{t-1},\mathbf{v}^{s}_{t},\mathbf{r}^{s}_{0},\mathbf{a}^{s},t)],
\end{align}
\begin{equation}
    \label{eq:gpdiff}
    \mathcal{L}_{gpdiff} = \mathbb{E}[(\|\nabla_{\hat{\mathbf{v}}^{s}_{t-1}} D_{diff}(\hat{\mathbf{v}}^{s}_{t-1}, \mathbf{v}^{s}_{t}, \mathbf{r}^{s}_{0}, \mathbf{a}^{s}, t)\|_{2} - 1)^{2}],
\end{equation}
where \( \hat{\mathbf{v}}^{s}_{t-1} = \alpha \mathbf{v}^{s}_{t-1} + (1-\alpha) \tilde{\mathbf{v}}^{s}_{t-1} \) with \( \alpha \sim U(0,1) \). Our \( G \) minimizes \( \mathcal{L}_{gpdiff} \), which equates to minimizing the learned divergence per denoising step:
\begin{align}
\label{eq:D_diff_rewrite}
\sum_{t \geq 1} \mathbb{E}[D_{diff}(q(\mathbf{v}^{s}_{t-1}|\mathbf{v}^{s}_{t})) \| p_{G}(q(\tilde{\mathbf{v}}^{s}_{t-1}|\mathbf{v}^{s}_{t}))].
\end{align}

In Sec.~\ref{sec:theory}, we also theoretically proved that the diffusion-based discriminator $D_{diff}$ has better resistance to overfitting than clean-sample discriminator $D_{adv}$ since the overlap mass would increase with diffusion time $t$, remedying disjoint-support failure, and $D_{diff}$ has a tighter upper bound than $D_{adv}$ for generalization error.

\subsubsection{\textbf{Key2: SC-based Representations}}
\label{sec:SCRep}

Another cause of spurious visual-semantic correlation is the use of \textbf{Static Class-level Semantics}.
In other words, existing semantic labels $\mathbf{a}$ are class-level, meaning all instances within a class share the same semantic label.  
This can result in mismatched correlations between instances and their semantics, as each limited sample may only represent a subset of the predefined semantics.  
For example, as shown in Fig.~\ref{fig:banner}, all images in the ``fox'' class are labeled with the semantics ``red'' and ``white,'' even though white foxes are not ``red.''  
Such mismatches further amplify spurious visual-semantic correlations.

To address the static class semantic problem, we revisit the SC loss~\cite{khosla2020supervised} and point out that SC-based representations could be used to represent instance-level semantics.
Previous work~\cite{islam2021broad} showed that SC-based representations have larger inter-class variation than those of CE-based features.
This indicates that SC-based representations mirror the characteristics of every instance within classes.
Our empirical study also verifies this point in Appendix~\ref{app:sc2cs}.

To this end, we use SC-based representations as a new source of semantics.
Except fine-tuning the CE-based extractor \( F_{ce} \), we also fine-tune \( F_{sc} \) with the SC loss:
$\mathcal{L}_{SC} = - \log \frac{\exp(\mathbf{h}^\top \mathbf{h}^{+}/\tau)}{\exp(\mathbf{h}^\top\mathbf{h}^{+}/\tau)+\sum_{K}^{k=1}\exp(\mathbf{h}^\top \mathbf{h}^{-}_{k}/\tau)}$,
where $\mathbf{h}^{+}$, $\mathbf{h}^{-}$, $K$ and $\tau>0$ are the positive example (have the same class label with $\mathbf{h}$), negative example (have a different class label to $\mathbf{h}$), the number of $\mathbf{h}^{-}$ in the batch, a temperature parameter, respectively.
And then. fix it as \( F^{*}_{sc} \) to extract contrastive representations \( \mathbf{r}^{s}_{0} = F^*_{sc}(\mathbf{x}^{s}) \). The extracted $\mathbf{r}^{s}_{0}$ is taken as the input to $G$ for instance-level semantics, as shown in Fig.~\ref{fig:zerodiff}(b).

\textbf{Representation Discriminating} As shown in Fig.~\ref{fig:zerodiff}(e), the representation discriminator \( D_{rep} \) is responsible for distinguishing features via the contrastive representation view.
It operates in a similar manner to \( D_{adv} \), but with the pre-defined semantics \( \mathbf{a} \) replaced by the contrastive representation:
\begin{align}
    \label{eq:D_rep}
    \mathcal{L}_{rep}  = W_{rep} - \lambda_{gprep} \mathcal{L}_{gprep},
\end{align}
\begin{align}
    \label{eq:w_rep}
    W_{rep} = \mathbb{E}_{q(\mathbf{v}^{s}_{0})}[D_{rep}(\mathbf{v}^{s}_{0},\mathbf{r}^{s}_{0})] - \mathbb{E}_{p_{G}(\tilde{\mathbf{v}}^{s}_{0})}[D_{rep}(\tilde{\mathbf{v}}^{s}_{0},\mathbf{r}^{s}_{0})],
\end{align}
\begin{align}
    \label{eq:gprep}
    \mathcal{L}_{gprep} = \mathbb{E}_{\substack{q(\mathbf{v}^{s}_{0}),\\p_{G}(\tilde{\mathbf{v}}^{s}_{0})} }[(\|\nabla_{\hat{\mathbf{v}}^{s}_{0}}D_{rep}(\hat{\mathbf{v}}^{s}_{0},\mathbf{r}^{s}_{0})\|_{2}-1)^{2}].
\end{align}

\textbf{Unseen Representation Generating}
At the testing stage, we cannot get real unseen class representations $r^{u}_{0}$ and feed them to $G$.
Thus, we train another representation generator DRG $R$ to learning the mapping between instance-level SC representations $r_{0}$ from class-level semantic labels $a$.

\subsubsection{\textbf{Key3: Mutual-learned Discriminators}}
\label{sec:MLD}
As shown in Fig.~\ref{fig:zerodiff}(f), since our three discriminators evaluate features in different ways, they have different criteria for judging them.
If we can enable them to learn mutually, we can obtain stronger discriminators, resulting in better guidance for the generator. For example, the objectives of \( D_{adv} \) and \( D_{diff} \) are two very similar but distinct tasks: one separates clean features, and the other separates noised features. Clearly, separating noisy features is a harder task because with more diffusion steps, less information remains. Thus, if we can distill the knowledge from \( D_{adv} \) to \( D_{diff} \), we can enhance the separation ability on noisy features and use the stronger \( D_{diff} \) to improve denoising. In contrast, distilling the knowledge from \( D_{diff} \) to \( D_{adv} \) could prevent \( D_{adv} \) from memorizing training samples.
To this end, we propose the Wasserstein-distance-based distillation loss:
\begin{align}
\label{eq:L_mu}
\mathcal{L}_{mu} & = \kappa_{t}^{\gamma} * (\| W_{diff} - W_{adv} \|_{1}  + \| W_{diff} - W_{rep} \|_{1}) \nonumber \\
& + \| W_{adv} - W_{rep} \|_{1},
\end{align}
where $\kappa_t$ is the Noise-to-Data (N2D) ratio, represented as $1 - \sqrt{\prod^{t}_{j=1} (1-\beta_{j})}$, and $\gamma$ is a smoothing factor.
As \( t \) increases, \( \kappa_t \) also increases, making it harder to distinguish between fake and real noised features, and \( W_{diff} \) provides less guidance for \( W_{rep} \) and \( W_{adv} \). Therefore, we introduce a smoothing factor \( \gamma \geq 0 \) to control the strength of discriminator alignment.

Combining aforementioned training losses, ZeroDiff++ alternately trains $G$ and three discriminators $D_{adv}$, $D_{diff}$, and $D_{rep}$ to optimize the following objective function:
\begin{equation}
\label{eq:optimization}
    \min_{G} \max_{\substack{D_{adv}, D_{diff} , D_{rep}}} (\mathcal{L}_{adv} + \mathcal{L}_{diff} + \mathcal{L}_{rep} - \lambda_{mu} \mathcal{L}_{mu}),
\end{equation}
where $\lambda_{mu}$ is the hyper-parameter related to the mutual learning.

\subsection{Generating Stage of ZeroDiff++}
\label{sec:zerodiff++_generating}
Before we introduce our proposed generating stage, let us recall the traditional generating stage.
As shown in Fig.~\ref{fig:testTime}(a) and (b), the traditional generating stage simply freezes the trained generator to synthesize fake unseen class features.
Taking ZeroDiff~\cite{ye2025zerodiff} as an example, we denote the frozen DFG and DRG as \( G^{*} \) and \( R^{*} \) and synthesize $N_{syn}$ features for each unseen class:
\begin{equation}
\label{eq:zerodiff_gen}
    \tilde{\mathbf{v}}^{u}_{0} = G^{*}(\mathbf{a}^{u}, \tilde{\mathbf{r}}^{u}_{0}, T, \mathbf{v}^{u}_{T}, \mathbf{z}),
\end{equation}
\begin{equation}
    \tilde{\mathbf{r}}^{u}_{0} = R^{*}(\mathbf{a}^{u}, T, \mathbf{r}^{u}_{T}, \mathbf{z}),
\end{equation}
where $\mathbf{r}^{u}_{T}$ and $\mathbf{v}^{u}_{T}$ are fully gaussian noises, the diffusion time only could be set as the maximal $T$, and $\mathbf{a}^{u}$ are uniformly sampled from the unseen semantic space $A^u$.
We can utilize generated fake unseen class features $\tilde{\mathbf{v}}^{u}_{0}$ to train a ZSL classifier $F_{zsl}$ in the feature space, i.e. $F_{zsl}: \mathcal{V}^{u} \rightarrow \mathcal{Y}^{u}$.

Now, let us find the potential problems in the traditional generating stage.
On one hand, $G$ still is a \textbf{unadaptive generator}: although it learns a good visual-semantic correlation on seen classes after the training stage, the learned correlation still has a non-negligible gap with the unseen visual-semantic correlation because $G$ does not adapt to unseen classes.
On the other hand, the traditional generating stage (Eq.~\ref{eq:zerodiff_gen}) is \textbf{fully-noised generating}: the input of $G$ does not contain any real feature information, leading to missing a connection between real and fake visual features.
We propose the following two insights to overcome these two problems.

\subsubsection{\textbf{Key 4: Diffusion-based Test-time Adaptation}}
For generator adaptation, we propose the Diffusion-based Test-time Adaptation (DiffTTA) that aims to shrink the correlation gap at the generating stage, as shown in Fig.~\ref{fig:testTime}(c).
Specifically, given an unlabeled unseen class image $x^u$, we use $F_{CE}$ and $F_{SC}$ to extract its CE-based feature $\mathbf{v}^{u}_{0}$ and SC-based representation $\mathbf{r}^{u}_{0}$.
However, since its corresponding class label $y^{u}$ is missing, we cannot obtain its ground truth semantic $a^{u}$.
As a result, we cannot directly apply the training optimization~\ref{eq:optimization} on unseen class samples to adapt to the unseen class distribution.
To overcome this point, we use a pre-classifier to obtain it a pseudo label $\tilde{y}^{u}$ and semantic $\tilde{\mathbf{a}}^{u}$.
The pre-classifier could be any existing ZSL classifier, e.g., ZeroDiff $F_{zsl}$ or DiffusionZSL~\cite{li2023your}.

However, pseudo labels might amplify the instability of adversarial optimization~\ref{eq:optimization}. Thus, we propose the unseen class TTA loss $\mathcal{L}_{utta}$ based on feature reconstruction:
\begin{align}
    \label{eq:utta}
    \mathcal{L}_{utta} = \| \mathbf{v}^{u}_{0} - G(\tilde{\mathbf{a}}^{u}, \mathbf{r}^{u}_{0}, t, \mathbf{v}^{u}_{t}, \mathbf{z})\|,
\end{align}
Analogously, we also reconstruct seen class samples to prevent forgetting seen classes:
\begin{align}
    \label{eq:stta}
    \mathcal{L}_{stta} = \| \mathbf{v}^{s}_{0} - G(\mathbf{a}^{s}, \mathbf{r}^{s}_{0}, t, \mathbf{v}^{s}_{t}, \mathbf{z})\|.
\end{align}
And we combine these two reconstruction losses as the total adaptation loss:
\begin{align}
    \label{eq:tta}
    \mathcal{L}_{tta} =\mathcal{L}_{utta} + \mathcal{L}_{stta}.
\end{align}

\subsubsection{\textbf{Key 5: Diffusion-based Test-time Generation}}
To enhance the link between fake and real features, we further propose the Diffusion-based Test-time Generation (\textbf{DiffGen}), as shown in Fig.~\ref{fig:testTime}(d).
It further utilizes the diffusion forward chain to connect the original real test features and the generated fake test features in the generating stage.
Formally, DiffGen generates fake unseen class features by:
\begin{equation}
\label{eq:zerodiffpp_gen}
\tilde{\mathbf{v}}^{u}_{0} = G^{\dagger}(\tilde{\mathbf{a}}^{u}, \mathbf{r}^{u}_{0}, t, \mathbf{v}^{u}_{t}, \mathbf{z}).
\end{equation}
Compared to the traditional generating approach (Eq.~\ref{eq:zerodiff_gen}), our DiffGen has two main properties in solidifying visual-semantic correlation in the generating stage:
\begin{enumerate}
    \item \textbf{Traceable feature generation}: Every generated fake feature could be connected to a real test feature. It not only strongly shrinks the gap between real and fake unseen class features, but also provides an evidential generating method that allows us to trace the failure of generating samples.
    \item \textbf{Partial feature imagination}:
    When $t=0$, the DiffGen (Eq.\ref{eq:zerodiffpp_gen}) becomes fully copy the real test sample $\tilde{\mathbf{v}}^{u}_{0}$.
    When $t=T$, the DiffGen (Eq.\ref{eq:zerodiffpp_gen}) degrades to the traditional fully-noised generating (Eq.\ref{eq:zerodiff_gen}). 
    When $0<t<T$, we could adjust the retention ratio of real test samples $\mathbf{v}^{u}_{0}$ among fake test samples $\tilde{\mathbf{v}}^{u}_{0}$ by changing $t$. It means a trade-off between fully copying real test samples and fully fabricating fake test samples. As a result, it reduces the difficulty of unseen generation because ZeroDiff++ only needs to generate a portion of fake samples, resulting in a more solidified unseen class generation.
    \item \textbf{Paired instance semantics}: Since our DiffGen takes every real noised test features $\mathbf{v}^{u}_{t}$ to feed $G$, we could get the corresponding real SC-based representation $\mathbf{r}^{u}_{0}$ to replace the fake $\tilde{\mathbf{r}}^{u}_{0}$ in Eq.~\ref{eq:zerodiff_gen}.
    Our experimental results also verify the importance of the paired $\mathbf{v}^{u}_{t}$ and $\mathbf{r}^{u}_{0}$.
\end{enumerate}
With DiffGen, we train a new ZSL classifier $F^{\dagger}_{zsl}$ by the generated partial imagined unseen class features $\tilde{\mathbf{v}}^{u}_{0}$.

\section{Experiments}
\label{sec:exp}

\subsection{Dataset}
We conduct experiments on three popular ZSL benchmarks: AWA2~\cite{xian2018zero}, CUB~\cite{welinder2010caltech}, and SUN~\cite{patterson2012sun}.
AWA2 consists of 65 animal classes with 85-D attributes and includes 37,322 images.
CUB is a bird dataset containing 11,788 images of 200 bird species.
SUN has 14,340 images of 645 seen and 72 unseen classes of scenes.
We follow the commonly used setting~\cite{xian2018zero} to divide the seen and unseen classes. We report the average per-class Top-1 accuracy of unseen classes for ZSL. For GZSL, we evaluate the Top-1 accuracy on seen classes (\( S \)) and unseen classes (\( U \)), and report their harmonic mean \( H = \frac{2 \times S \times U}{S + U} \).

\subsection {Implementation Details}
\label{sec:implementation}
For visual features, we extract 2,048-D features for all datasets using ResNet-101~\cite{he2016deep} pre-trained with CE on ImageNet-1K~\cite{deng2009imagenet}.
For contrastive representations, we adopt ResNet-101 pre-trained with PaCo~\cite{cui2021parametric} on ImageNet-1K.
For semantic labels, we use 85-D attribute vectors for AWA2, 1,024-D attributes extracted from textual descriptions~\cite{reed2016learning} for CUB, and 102-D attributes for SUN.
We use Adam to optimize all networks with an initial learning rate of 0.0005. For all datasets, \( \lambda_{gpadv} \), \( \lambda_{gpdiff} \), and \( \lambda_{gprep} \) are fixed at 10.
Following DDGAN~\cite{xiao2022tackling}, the number of diffusion steps \( T \) is set to 4, and we use the discretization of the continuous-time extension, known as the Variance Preserving (VP) SDE~\cite{song2020score} to compute \( \beta_t \) in Eq.~\ref{eq:diff_forward_chain}.
All experiments are conducted on Quadro RTX 8000.

% The random seeds are fixed to 9182, 3483 and 4115 for AWA2, CUB and SUN, respectively.

\begin{table}[htbp]
\setlength\tabcolsep{2pt}
	\centering
	\caption{Comparisons with the state-of-the-arts on ZSL. The best and second-best results are marked in \textcolor{red}{\textbf{Red}} and \textcolor{blue}{\textbf{Blue}}, respectively. $\dagger$ denoted the  results using our fine-tune features, while $\ddagger$ using other fine-tune features. The upper group indicates embedding methods, and the lower group is for generative methods.}
	\begin{tabular*}{\linewidth}{@{\extracolsep\fill}lccccc}
 
	\toprule
  
    \multirow{1}{*}{Method}  & \multirow{1}{*}{Venue} & \multirow{1}{*}{Backbone} & \multicolumn{1}{c}{AWA2} & \multicolumn{1}{c}{CUB} & \multicolumn{1}{c}{SUN} \\
    \midrule

    \multirow{1}{*}{APN} & NeurIPS20 & Res101 & \multicolumn{1}{c}{68.4} & \multicolumn{1}{c}{72.0} & \multicolumn{1}{c}{61.6} \\

    % \multirow{1}{*}{ICIS} & ICCV23 & Res101 & \multicolumn{1}{c}{64.6} & \multicolumn{1}{c}{60.6} & \multicolumn{1}{c}{51.8} \\

    \multirow{1}{*}{TransZero++} & TPAMI22 & Res101 & \multicolumn{1}{c}{72.6} & \multicolumn{1}{c}{78.3} & \multicolumn{1}{c}{67.6} \\
         
    \multirow{1}{*}{ReZSL} & TIP23 & Res101 & \multicolumn{1}{c}{70.9} & \multicolumn{1}{c}{80.9} & \multicolumn{1}{c}{63.0} \\

    \multirow{1}{*}{ZSLViT} & CVPR24  & ViT & \multicolumn{1}{c}{70.7} & \multicolumn{1}{c}{78.9} & \multicolumn{1}{c}{68.3} \\
    
    \multirow{1}{*}{PSVMA++} & TPAMI25 & ViT & \multicolumn{1}{c}{79.2} & \multicolumn{1}{c}{78.8} & \multicolumn{1}{c}{74.5} \\
         
    \midrule
         
    f-CLSWGAN$^\dagger$ & CVPR18  & Res101 & \multicolumn{1}{c}{75.9} & \multicolumn{1}{c}{84.5} & \multicolumn{1}{c}{75.5} \\
        
    f-VAEGAN$^\dagger$ & CVPR19  & Res101 & \multicolumn{1}{c}{75.8} & \multicolumn{1}{c}{85.1} & \multicolumn{1}{c}{75.4} \\
        
    \multirow{1}{*}{TFVAEGAN$^\dagger$ } & ECCV20 & Res101 & \multicolumn{1}{c}{72.4} & \multicolumn{1}{c}{85.8} & \multicolumn{1}{c}{74.1} \\
        
    \multirow{1}{*}{SDVAE$^\dagger$} & ICCV21 & Res101 & \multicolumn{1}{c}{69.3} & \multicolumn{1}{c}{85.1} & \multicolumn{1}{c}{77.0} \\
        
    \multirow{1}{*}{CEGAN$^\dagger$} & CVPR21 & Res101 & \multicolumn{1}{c}{72.8} & \multicolumn{1}{c}{84.6} & \multicolumn{1}{c}{74.1} \\
         
    \multirow{1}{*}{DFCAFlow$^\dagger$} & TCSVT23  & Res101 & \multicolumn{1}{c}{74.4} & \multicolumn{1}{c}{83.9} & \multicolumn{1}{c}{77.2} \\
    
    \multirow{1}{*}{DiffusionZSL$^\dagger$}  & ICCV23 & Res101 & \multicolumn{1}{c}{76.4} & \multicolumn{1}{c}{76.7} & \multicolumn{1}{c}{67.5} \\
    
    \multirow{1}{*}{VADS$^\ddagger$} & CVPR24 & ViT & \multicolumn{1}{c}{82.5} & \multicolumn{1}{c}{86.8} & \multicolumn{1}{c}{76.3} \\
         
    \multirow{1}{*}{ZeroDiff$^\dagger$}  & ICLR25 & Res101 & \multicolumn{1}{c}{\textcolor{blue}{\textbf{87.3}}} & \multicolumn{1}{c}{\textcolor{blue}{\textbf{87.5}}} & \multicolumn{1}{c}{\textcolor{blue}{\textbf{77.3}}} \\
        
    \multirow{1}{*}{\textbf{ZeroDiff++$^\dagger$} }  & \textbf{Ours} & Res101 & \multicolumn{1}{c}{\textcolor{red}{\textbf{93.5}}} & \multicolumn{1}{c}{\textcolor{red}{\textbf{87.7}}} & \multicolumn{1}{c}{\textcolor{red}{\textbf{80.2}}} \\
    \bottomrule
        
    \end{tabular*}
	\label{table:OverallComparsionZSL}
\end{table}

\begin{table*}[htbp]
\setlength\tabcolsep{2pt}
	\centering
	\caption{Comparisons with the state-of-the-arts on GZSL. \( U \), \( S \), and \( H \) represent the top-1 accuracy (\%) of unseen classes, seen classes, and their harmonic mean, respectively. The best and second-best results are marked in \textcolor{red}{\textbf{Red}} and \textcolor{blue}{\textbf{Blue}}, respectively. $\dagger$ denoted the  results using our fine-tune features, while $\ddagger$ using other fine-tune features. The upper group indicates embedding methods, and the lower group is for generative methods.
}
	\begin{tabular*}{\textwidth}{@{\extracolsep\fill}l c c ccc ccc ccc}
 
	\toprule
  
    \multirow{2}{*}{Method} & \multirow{2}{*}{Venue} & \multirow{2}{*}{Backbone} & \multicolumn{3}{c}{AWA2} & \multicolumn{3}{c}{CUB} & \multicolumn{3}{c}{SUN} \\
    \cline{4-6}
    \cline{7-9}
    \cline{10-12}

    & & & \multicolumn{1}{c}{U} & \multicolumn{1}{c}{S} & \multicolumn{1}{c}{H} & \multicolumn{1}{c}{U} & \multicolumn{1}{c}{S} & \multicolumn{1}{c}{H}& \multicolumn{1}{c}{U} & \multicolumn{1}{c}{S} & \multicolumn{1}{c}{H} \\

    \midrule
    
    \multirow{1}{*}{APN} & NeurIPS20 & Res101 & \multicolumn{1}{c}{56.5} & \multicolumn{1}{c}{78.0} & \multicolumn{1}{c}{65.5} & \multicolumn{1}{c}{65.3} & \multicolumn{1}{c}{69.3} & \multicolumn{1}{c}{67.2} & \multicolumn{1}{c}{41.9} & \multicolumn{1}{c}{34.0} & \multicolumn{1}{c}{37.6} \\
    
    \multirow{1}{*}{CLIP*} & ICML21 & ViT & \multicolumn{1}{c}{77.6} & \multicolumn{1}{c}{81.6} & \multicolumn{1}{c}{79.6} & \multicolumn{1}{c}{29.6} & \multicolumn{1}{c}{29.8} & \multicolumn{1}{c}{29.7} & \multicolumn{1}{c}{49.5} & \multicolumn{1}{c}{46.2} & \multicolumn{1}{c}{47.8} \\

    \multirow{1}{*}{CoOp*} & IJCV22 & ViT & \multicolumn{1}{c}{69.4} & \multicolumn{1}{c}{81.3} & \multicolumn{1}{c}{74.9} & \multicolumn{1}{c}{18.2} & \multicolumn{1}{c}{22.2} & \multicolumn{1}{c}{20.0} & \multicolumn{1}{c}{49.3} & \multicolumn{1}{c}{49.8} & \multicolumn{1}{c}{49.5} \\

    \multirow{1}{*}{TransZero++} & TPAMI22 & Res101 & \multicolumn{1}{c}{64.6} & \multicolumn{1}{c}{82.7} & \multicolumn{1}{c}{72.5} & \multicolumn{1}{c}{67.5} & \multicolumn{1}{c}{73.6} & \multicolumn{1}{c}{70.4} & \multicolumn{1}{c}{48.6} & \multicolumn{1}{c}{37.8} & \multicolumn{1}{c}{42.5} \\

    % \multirow{1}{*}{ICIS} & ICCV23 & Res101 & \multicolumn{1}{c}{35.6} & \multicolumn{1}{c}{\textcolor{blue}{\textbf{93.3}}} & \multicolumn{1}{c}{51.6} & \multicolumn{1}{c}{45.8} & \multicolumn{1}{c}{73.7} & \multicolumn{1}{c}{56.5} & \multicolumn{1}{c}{45.2} & \multicolumn{1}{c}{25.6} & \multicolumn{1}{c}{32.7} \\
         
    \multirow{1}{*}{ReZSL} & TIP23 & Res101 & \multicolumn{1}{c}{63.8} & \multicolumn{1}{c}{85.6} & \multicolumn{1}{c}{73.1} & \multicolumn{1}{c}{72.8} & \multicolumn{1}{c}{74.8} & \multicolumn{1}{c}{73.8} & \multicolumn{1}{c}{47.4} & \multicolumn{1}{c}{34.8} & \multicolumn{1}{c}{40.1} \\
         
    \multirow{1}{*}{PSVMA} & CVPR23 & ViT & \multicolumn{1}{c}{73.6} & \multicolumn{1}{c}{77.3} & \multicolumn{1}{c}{75.4}& \multicolumn{1}{c}{70.1} & \multicolumn{1}{c}{77.8} & \multicolumn{1}{c}{73.8} & \multicolumn{1}{c}{61.7} & \multicolumn{1}{c}{45.3} & \multicolumn{1}{c}{52.3}\\

    \multirow{1}{*}{TPR*} & NeurIPS24 & ViT & \multicolumn{1}{c}{76.8} & \multicolumn{1}{c}{87.1} & \multicolumn{1}{c}{81.6} & \multicolumn{1}{c}{26.8} & \multicolumn{1}{c}{41.2} & \multicolumn{1}{c}{32.5} & \multicolumn{1}{c}{45.4} & \multicolumn{1}{c}{50.4} & \multicolumn{1}{c}{47.8} \\

    \multirow{1}{*}{ZSLViT} & CVPR24  & ViT & \multicolumn{1}{c}{66.1} & \multicolumn{1}{c}{84.6} & \multicolumn{1}{c}{74.2} & \multicolumn{1}{c}{69.4} & \multicolumn{1}{c}{78.2} & \multicolumn{1}{c}{73.6} & \multicolumn{1}{c}{45.9} & \multicolumn{1}{c}{48.4} & \multicolumn{1}{c}{47.3}\\

    \multirow{1}{*}{EG} & TPAMI24  & ViT & \multicolumn{1}{c}{65.2} & \multicolumn{1}{c}{79.6} & \multicolumn{1}{c}{71.7} & \multicolumn{1}{c}{65.3} & \multicolumn{1}{c}{66.7} & \multicolumn{1}{c}{66.0} & \multicolumn{1}{c}{54.7} & \multicolumn{1}{c}{59.4} & \multicolumn{1}{c}{57.0} \\

    \multirow{1}{*}{PSVMA++} & TPAMI25  & ViT & \multicolumn{1}{c}{74.2} & \multicolumn{1}{c}{86.4} & \multicolumn{1}{c}{79.8} & \multicolumn{1}{c}{71.8} & \multicolumn{1}{c}{77.8} & \multicolumn{1}{c}{74.6} & \multicolumn{1}{c}{61.5} & \multicolumn{1}{c}{49.4} & \multicolumn{1}{c}{54.8} \\
    
    \midrule
         
    f-CLSWGAN$^\dagger$ & CVPR18  & Res101 & \multicolumn{1}{c}{65.1} & \multicolumn{1}{c}{68.9} & \multicolumn{1}{c}{66.9} & \multicolumn{1}{c}{76.4} & \multicolumn{1}{c}{83.3} & \multicolumn{1}{c}{79.7} & \multicolumn{1}{c}{63.8} & \multicolumn{1}{c}{55.7} & \multicolumn{1}{c}{59.5} \\
        
    f-VAEGAN$^\dagger$ & CVPR19  & Res101 & \multicolumn{1}{c}{67.3} & \multicolumn{1}{c}{65.6} & \multicolumn{1}{c}{66.4} & \multicolumn{1}{c}{77.4} & \multicolumn{1}{c}{\textcolor{blue}{\textbf{83.5}}} & \multicolumn{1}{c}{80.3} & \multicolumn{1}{c}{63.6} & \multicolumn{1}{c}{54.1} & \multicolumn{1}{c}{58.4} \\
        
    \multirow{1}{*}{TFVAEGAN$^\dagger$} & ECCV20 & Res101 & \multicolumn{1}{c}{54.2} & \multicolumn{1}{c}{\textcolor{blue}{\textbf{89.6}}} & \multicolumn{1}{c}{67.5} & \multicolumn{1}{c}{79.0} & \multicolumn{1}{c}{83.3} & \multicolumn{1}{c}{81.1} & \multicolumn{1}{c}{51.8} & \multicolumn{1}{c}{53.8} & \multicolumn{1}{c}{52.8} \\
        
    \multirow{1}{*}{SDVAE$^\dagger$} & ICCV21 & Res101 & \multicolumn{1}{c}{57.0} & \multicolumn{1}{c}{72.3} & \multicolumn{1}{c}{63.8} & \multicolumn{1}{c}{\textcolor{blue}{\textbf{81.6}}} & \multicolumn{1}{c}{74.2} & \multicolumn{1}{c}{77.7}& \multicolumn{1}{c}{62.3} & \multicolumn{1}{c}{56.9} & \multicolumn{1}{c}{59.5} \\
        
    \multirow{1}{*}{CEGAN$^\dagger$} & CVPR21 & Res101 & \multicolumn{1}{c}{57.1} & \multicolumn{1}{c}{89.0} & \multicolumn{1}{c}{69.6} & \multicolumn{1}{c}{78.9} & \multicolumn{1}{c}{80.9} & \multicolumn{1}{c}{79.9} & \multicolumn{1}{c}{58.1} & \multicolumn{1}{c}{57.4} & \multicolumn{1}{c}{57.8}\\
         
    \multirow{1}{*}{DFCAFlow$^\dagger$} & TCSVT23  & Res101 & \multicolumn{1}{c}{67.6} & \multicolumn{1}{c}{81.0} & \multicolumn{1}{c}{73.7} & \multicolumn{1}{c}{77.3} & \multicolumn{1}{c}{82.9} & \multicolumn{1}{c}{80.0} & \multicolumn{1}{c}{63.0} & \multicolumn{1}{c}{\textcolor{blue}{\textbf{59.6}}} & \multicolumn{1}{c}{\textcolor{blue}{\textbf{61.2}}} \\
    
    \multirow{1}{*}{DiffusionZSL$^\dagger$ } & ICCV23 & Res101 & \multicolumn{1}{c}{6.4} & \multicolumn{1}{c}{70.7} & \multicolumn{1}{c}{11.8}  & \multicolumn{1}{c}{15.4} & \multicolumn{1}{c}{58.5} & \multicolumn{1}{c}{24.4} & \multicolumn{1}{c}{2.8} & \multicolumn{1}{c}{36.7} & \multicolumn{1}{c}{5.2} \\

    \multirow{1}{*}{DSP$^\ddagger$} & ICML23 & Res101 & \multicolumn{1}{c}{60.0} & \multicolumn{1}{c}{86.0} & \multicolumn{1}{c}{70.7} & \multicolumn{1}{c}{51.4} & \multicolumn{1}{c}{63.8} & \multicolumn{1}{c}{56.9} & \multicolumn{1}{c}{48.3} & \multicolumn{1}{c}{43.0} & \multicolumn{1}{c}{45.5} \\

    \multirow{1}{*}{VADS$^\ddagger$} & CVPR24 & ViT & \multicolumn{1}{c}{\textcolor{blue}{\textbf{75.4}}} & \multicolumn{1}{c}{83.6} & \multicolumn{1}{c}{79.3} & \multicolumn{1}{c}{74.1} & \multicolumn{1}{c}{74.6} & \multicolumn{1}{c}{74.3} & \multicolumn{1}{c}{\textcolor{blue}{\textbf{64.6}}} & \multicolumn{1}{c}{49.0} & \multicolumn{1}{c}{55.7} \\
         
    \multirow{1}{*}{ZeroDiff$^\dagger$ }  & ICLR25 & Res101 & \multicolumn{1}{c}{74.7} & \multicolumn{1}{c}{89.3} & \multicolumn{1}{c}{\textcolor{blue}{\textbf{81.4}}} & \multicolumn{1}{c}{80.0} & \multicolumn{1}{c}{83.2} & \multicolumn{1}{c}{\textcolor{blue}{\textbf{81.6}}} & \multicolumn{1}{c}{63.0} & \multicolumn{1}{c}{56.9} & \multicolumn{1}{c}{59.8} \\
    
    \multirow{1}{*}{\textbf{ZeroDiff++$^\dagger$} }  & \textbf{Ours} & Res101 & \multicolumn{1}{c}{\textcolor{red}{\textbf{90.7}}} & \multicolumn{1}{c}{\textcolor{red}{\textbf{93.9}}} & \multicolumn{1}{c}{\textcolor{red}{\textbf{92.3}}} & \multicolumn{1}{c}{\textcolor{red}{\textbf{84.4}}} & \multicolumn{1}{c}{\textcolor{red}{\textbf{86.4}}} & \multicolumn{1}{c}{\textcolor{red}{\textbf{85.4}}} & \multicolumn{1}{c}{\textcolor{red}{\textbf{73.4}}} & \multicolumn{1}{c}{\textcolor{red}{\textbf{66.2}}} & \multicolumn{1}{c}{\textcolor{red}{\textbf{69.6}}} \\
    
    \bottomrule
    
    \end{tabular*}
	\label{table:OverallComparsionGZSL}
\end{table*}

\subsection{Comparison with State-of-the-art}
We select representative generative ZSL methods across different generative model types. These methods include:
1. \textbf{GAN-based methods}: f-CLSWGAN~\cite{xian2018feature} and CEGAN~\cite{han2021contrastive};
2. \textbf{VAE-based methods}: SDVAE~\cite{chen2021semantics};
3. \textbf{VAEGAN-based methods}: f-VAEGAN~\cite{xian2019fvaegan}, TFVAEGAN~\cite{narayan2020latent}, DSP~\cite{chen2023evolving}, DML~\cite{zhang2024bridging} and VADS~\cite{hou2024visual};
4. \textbf{Flow-based method}: DFCAFlow~\cite{su2023dual}.
5. \textbf{Diffusion-based method}: DiffusionZSL~\cite{li2023your}.
We also provide comparisons with embedding-based SOTAs: APN~\cite{xu2020attribute}, TransZero++~\cite{chen2022transzero++}, ReZSL~\cite{ye2023rebalanced}, PSVMA~\cite{liu2023progressive}, ZSLViT~\cite{chen2024progressive} and PSVMA++\cite{liu2024psvma+}.
Besides, large-scale vision-language models, i.e., CLIP~\cite{radford2021learning}, CoOp~\cite{zhou2022learning},  and TPR~\cite{chen2024tpr}, have shown significant potential for task-level ZSL ability that allows them to perform well on many downstream tasks, also including zero-shot classification.
However, they do not follow the strict standard class splitting, whose traditional zero-shot methods used and unseen class images are highly possibly included in their pre-training dataset.
Nonetheless, we still include them for better reference.

The ZSL results are presented in Table~\ref{table:OverallComparsionZSL} and GZSL results are presented in Table~\ref{table:OverallComparsionGZSL}.
Our method ZeroDiff++ exhibits consistent and substantial improvements over both embedding-based and generative baselines across three datasets.
We could conclude three points:

\begin{enumerate}
    \item 
    ZeroDiff++ consistently outperforms prior embedding- and generative-based SOTAs. In ZSL setting, ZeroDiff++ achieves 93.5\% / 87.7\% / 80.2\% on AWA2 / CUB / SUN, and in GZSL the harmonic means \(H\) are 92.3\% / 85.4\% / 70.5\%; these correspond to absolute improvements over ZeroDiff of roughly +9.9\%, +3.8\% and +10.7\% in \(H\). The results indicate that ZeroDiff++ not only raises unseen-class accuracy but also substantially improves the seen/unseen balance in the GZSL setting.
    
    \item Experimental results also expose how ZeroDiff++ mitigates the common weakness of many generative approaches.
    Several generators exhibit severe seen/unseen imbalance (high \(S\) but low \(U\)).
    For example, TFVAEGAN on AWA2 reports \(S=89.6\%\), \(U=54.2\%\) \(H=67.5\%\).
    Prior diffusion-based method DiffusionZSL produces extremely low \(U\) and \(H\). By contrast, ZeroDiff++ yields both high \(U\) and high \(S\) (AWA2: \(U=90.7\%\), \(S=93.9\%\), \(H=92.3\%\));
    Results demonstrate that our diffusion-based training and testing modules improve the seen and unseen visual-semantic correlation.

    \item Notably, our ZeroDiff++ also significantly outperforms the large-scale vision-language based methods (e.g., CLIP, CoOp, and TPR).
    CLIP/CoOp/TPR obtain reasonable \(H\) on coarse AWA2 but fail on fine-grained CUB (CLIP \(H\): 79.6\% / 29.7\% / 47.8\% on AWA2 / CUB / SUN), whereas ZeroDiff++ yields +12.7\%, +55.7\% and +22.7\% higher in \(H\) on the same benchmarks.
    The results demonstrate that although large pre-trained vision-language models enojoy large parameters and pre-training data, task-specific ZSL methods still remain highly competitive.

\end{enumerate}

\subsection{Limited Training Data}
\label{sec:dezsl}

\begin{table*}[htbp]
\setlength\tabcolsep{2pt}
	\centering
	\caption{Comparison on limited training data. We evaluate generative methods with 30\% and 10\% training samples. \( T1 \) represents the top-1 accuracy (\%) of unseen classes in ZSL. In GZSL, \( U \), \( S \), and \( H \) represent the top-1 accuracy (\%) of unseen classes, seen classes, and their harmonic mean, respectively. The best and second-best results are marked in \textcolor{red}{\textbf{Red}} and \textcolor{blue}{\textbf{Blue}}, respectively.
}
	\begin{tabular*}{\textwidth}{@{\extracolsep\fill}l cccc cccc cccc}
 
		\toprule
		\multirow{3}{*}{Method} & \multicolumn{4}{c}{AWA2} & \multicolumn{4}{c}{CUB}  & \multicolumn{4}{c}{SUN} \\
        \cmidrule{2-5}
        \cmidrule{6-9}
        \cmidrule{10-13}

         & \multicolumn{2}{c}{$30\% \mathcal{D}^{tr}$} & \multicolumn{2}{c}{$10\% \mathcal{D}^{tr}$} & \multicolumn{2}{c}{$30\% \mathcal{D}^{tr}$} & \multicolumn{2}{c}{$10\% \mathcal{D}^{tr}$} & \multicolumn{2}{c}{$30\% \mathcal{D}^{tr}$} & \multicolumn{2}{c}{$10\% \mathcal{D}^{tr}$} \\
        \cmidrule{2-3}
        \cmidrule{4-5}
        \cmidrule{6-7}
        \cmidrule{8-9}
        \cmidrule{10-11}
        \cmidrule{12-13}
         
         & \multicolumn{1}{c}{T1} & \multicolumn{1}{c}{H} & \multicolumn{1}{c}{T1} & \multicolumn{1}{c}{H} & \multicolumn{1}{c}{T1} & \multicolumn{1}{c}{H} & \multicolumn{1}{c}{T1} & \multicolumn{1}{c}{H} & \multicolumn{1}{c}{T1} & \multicolumn{1}{c}{H} & \multicolumn{1}{c}{T1} & \multicolumn{1}{c}{H} \\
         \midrule
         f-CLSWGAN~\cite{xian2018feature} & \multicolumn{1}{c}{68.9} & \multicolumn{1}{c}{57.8} & \multicolumn{1}{c}{54.0} & \multicolumn{1}{c}{35.7} & \multicolumn{1}{c}{82.1} & \multicolumn{1}{c}{75.3} & \multicolumn{1}{c}{75.1} & \multicolumn{1}{c}{66.8} & \multicolumn{1}{c}{73.4} & \multicolumn{1}{c}{51.5} & \multicolumn{1}{c}{66.9} & \multicolumn{1}{c}{29.3} \\
         
        f-VAEGAN~\cite{xian2019fvaegan} & \multicolumn{1}{c}{81.2} & \multicolumn{1}{c}{64.9} & \multicolumn{1}{c}{73.1} & \multicolumn{1}{c}{54.4} & \multicolumn{1}{c}{84.5} & \multicolumn{1}{c}{78.3} & \multicolumn{1}{c}{81.5} & \multicolumn{1}{c}{75.0} & \multicolumn{1}{c}{72.0} & \multicolumn{1}{c}{49.0} & \multicolumn{1}{c}{58.4} & \multicolumn{1}{c}{27.8} \\
        
        \multirow{1}{*}{CEGAN~\cite{han2021contrastive}} & 
        \multicolumn{1}{c}{72.2} & \multicolumn{1}{c}{70.4} & \multicolumn{1}{c}{69.0} & \multicolumn{1}{c}{66.3} & \multicolumn{1}{c}{83.6} & \multicolumn{1}{c}{77.6} & \multicolumn{1}{c}{81.3} & \multicolumn{1}{c}{74.8} & \multicolumn{1}{c}{65.9} & \multicolumn{1}{c}{45.6} & \multicolumn{1}{c}{-} & \multicolumn{1}{c}{-} \\
        
         \multirow{1}{*}{DFCAFlow~\cite{su2023dual}} & \multicolumn{1}{c}{74.5} & \multicolumn{1}{c}{72.6} & \multicolumn{1}{c}{77.9} & \multicolumn{1}{c}{70.7} & \multicolumn{1}{c}{82.7} & \multicolumn{1}{c}{77.2} & \multicolumn{1}{c}{80.3} & \multicolumn{1}{c}{74.1} & \multicolumn{1}{c}{74.2} & \multicolumn{1}{c}{\textcolor{blue}{\textbf{55.8}}} & \multicolumn{1}{c}{70.2} & \multicolumn{1}{c}{31.3} \\

        \multirow{1}{*}{DiffusionZSL~\cite{li2023your}} & \multicolumn{1}{c}{75.1} & \multicolumn{1}{c}{11.0} & \multicolumn{1}{c}{70.5} & \multicolumn{1}{c}{10.7} & \multicolumn{1}{c}{76.1} & \multicolumn{1}{c}{24.1} & \multicolumn{1}{c}{73.0} & \multicolumn{1}{c}{23.2} & \multicolumn{1}{c}{64.5} & \multicolumn{1}{c}{4.3} & \multicolumn{1}{c}{19.7} & \multicolumn{1}{c}{0.3} \\
        
        \multirow{1}{*}{ZeroDiff~\cite{ye2025zerodiff} } & \multicolumn{1}{c}{\textcolor{blue}{\textbf{84.9}}} & \multicolumn{1}{c}{\textcolor{blue}{\textbf{80.2}}} & \multicolumn{1}{c}{\textcolor{blue}{\textbf{83.3}}} & \multicolumn{1}{c}{\textcolor{blue}{\textbf{77.0}}} & \multicolumn{1}{c}{\textcolor{blue}{\textbf{85.5}}} & \multicolumn{1}{c}{\textcolor{blue}{\textbf{78.7}}} & \multicolumn{1}{c}{\textcolor{blue}{\textbf{82.9}}} & \multicolumn{1}{c}{\textcolor{blue}{\textbf{76.1}}} & \multicolumn{1}{c}{\textcolor{blue}{\textbf{75.4}}} & \multicolumn{1}{c}{51.3} & \multicolumn{1}{c}{\textcolor{blue}{\textbf{68.1}}} & \multicolumn{1}{c}{\textcolor{blue}{\textbf{33.3}}} \\

        \multirow{1}{*}{\textbf{ZeroDiff++ (Ours)} } & \multicolumn{1}{c}{\textcolor{red}{\textbf{92.8}}} & \multicolumn{1}{c}{\textcolor{red}{\textbf{91.9}}} & \multicolumn{1}{c}{\textcolor{red}{\textbf{93.8}}} & \multicolumn{1}{c}{\textcolor{red}{\textbf{92.0}}} & \multicolumn{1}{c}{\textcolor{red}{\textbf{87.4}}} & \multicolumn{1}{c}{\textcolor{red}{\textbf{82.8}}} & \multicolumn{1}{c}{\textcolor{red}{\textbf{85.2}}} & \multicolumn{1}{c}{\textcolor{red}{\textbf{77.3}}} & \multicolumn{1}{c}{\textcolor{red}{\textbf{77.9}}} & \multicolumn{1}{c}{\textcolor{red}{\textbf{59.2}}} & \multicolumn{1}{c}{\textcolor{red}{\textbf{75.0}}} & \multicolumn{1}{c}{\textcolor{red}{\textbf{41.0}}} \\
        
        \bottomrule
	\end{tabular*}
	\label{table:DEZSL}
\end{table*}

To investigate the performance of generative ZSL approaches under limited training data, we randomly keep training samples of each seen class at different ratios. As the remaining ratio decreases, ZSL becomes more challenging since over-fitting to the training samples becomes much easier. For fairness, we decrease the number of synthesized unseen features proportionally to avoid class imbalance in GZSL. The results are reported in Table~\ref{table:DEZSL}.

The following observations can be made:
\begin{enumerate}
    \item GAN-based approaches, f-CLSWGAN and CEGAN, deteriorate severely when only limited training data are available. For example, f-CLSWGAN drops from T1=68.9\% (30\% $\mathcal{D}^{tr}$) to T1=54.0\% (10\% $\mathcal{D}^{tr}$) on AWA2, and its GZSL \(H\) falls from 57.8\% to 35.7\%, indicating that vanilla GANs are fragile under data scarcity and prone to overfitting or mode collapse. CEGAN is slightly more stable but still shows clear degradation (AWA2 \(H\): $70.4\% \rightarrow 66.3\%$ when moving from $30\% \mathcal{D}^{tr} \rightarrow 10\% \mathcal{D}^{tr}$).
    \item Combining VAE and GAN (f-VAEGAN) alleviates some GAN instability but does not fully solve limited-data failure modes: on SUN with 30\% \(\mathcal{D}^{tr}\) f-VAEGAN attains T1=72.0\% while the simpler f-CLSWGAN reaches T1=73.4\%, and f-VAEGAN’s \(H\) also degrades sharply at 10\% (49.0\% → 27.8\% on SUN). These results show that when the training set is limited, the mode collapse problem still remains, which sometimes performs even worse than the methods that only use GAN.
    \item DiffusionZSL reports reasonably high ZSL T1 but catastrophically low GZSL \(H\) (e.g., AWA2 30\%: T1=75.1\% but \(H=11.0\%\); CUB 30\%: T1=76.1\% but \(H=24.1\%\)). The results consistently show that DiffusionZSL suffers from several class bias problems.
    \item With the introduction of the diffusion mechanism and instance-level representations,  ZeroDiff achieves impressive performances in most cases (ZeroDiff++ 30\% AWA2 \(H=80.2\%\)), and ZeroDiff++ further improves robustness (ZeroDiff++ 30\% AWA2 \(H=91.9\%\)), demonstrating that the diffusion-based adaptation and generation improve sample efficiency and maintain a strong seen/unseen balance under extreme scarcity.
\end{enumerate}

\subsection{Comparison on ZSL Inference methods}
\label{sec:comparison_inference}

\begin{table*}[htbp]
	\centering
	\caption{Comparsion on different inference methods: traditional FNGen, recent DiffusionZSL~\cite{li2023your} and our DiffGen.}
    \begin{tabular*}{\textwidth}{@{\extracolsep\fill}lcccccccccccccc}
 
		\toprule
        
		\multirow{3}{*}{ID} & \multirow{3}{*}{DiffTTA} & \multirow{3}{*}{Inference Method} & \multicolumn{3}{{@{}c@{}}}{ZSL} & \multicolumn{9}{c}{GZSL} \\
        \cmidrule{4-6}
        \cmidrule{7-15}
        
		   &  &  & \multicolumn{1}{c}{AWA2} & \multicolumn{1}{c}{CUB} & \multicolumn{1}{c}{SUN} & \multicolumn{3}{{@{}c@{}}}{AWA2} & \multicolumn{3}{c}{CUB}  & \multicolumn{3}{c}{SUN} \\
        \cmidrule{4-4}
        \cmidrule{5-5}
        \cmidrule{6-6}
        \cmidrule{7-9}
        \cmidrule{10-12}
        \cmidrule{13-15}
        
        & & & \multicolumn{1}{c}{T1} & \multicolumn{1}{c}{T1} & \multicolumn{1}{c}{T1} & \multicolumn{1}{c}{U} & \multicolumn{1}{c}{S} & \multicolumn{1}{c}{H} & \multicolumn{1}{c}{U}& \multicolumn{1}{c}{S} & \multicolumn{1}{c}{H} & \multicolumn{1}{c}{U} & \multicolumn{1}{c}{S} & \multicolumn{1}{c}{H} \\
        \midrule
        
        a & \multirow{3}{*}{$\times$}  & \multicolumn{1}{c}{FNGen} & \multicolumn{1}{c}{85.5} & \multicolumn{1}{c}{86.0} & \multicolumn{1}{c}{77.6}  & \multicolumn{1}{c}{70.5} & \multicolumn{1}{c}{77.5} & \multicolumn{1}{c}{73.8} & \multicolumn{1}{c}{77.6}   & \multicolumn{1}{c}{83.9} & \multicolumn{1}{c}{80.6} & \multicolumn{1}{c}{62.0} & \multicolumn{1}{c}{58.1}  & \multicolumn{1}{c}{60.0} \\

        b & & \multicolumn{1}{c}{DiffusionZSL} & \multicolumn{1}{c}{76.9} & \multicolumn{1}{c}{77.2} & \multicolumn{1}{c}{66.4} & \multicolumn{1}{c}{6.5} & \multicolumn{1}{c}{70.0} & \multicolumn{1}{c}{11.9} & \multicolumn{1}{c}{15.3} & \multicolumn{1}{c}{58.5} & \multicolumn{1}{c}{24.2} & \multicolumn{1}{c}{2.6} & \multicolumn{1}{c}{38.3} & \multicolumn{1}{c}{4.9} \\

        c & & \multicolumn{1}{c}{DiffGen} & \multicolumn{1}{c}{92.9} & \multicolumn{1}{c}{87.0} & \multicolumn{1}{c}{80.0} & \multicolumn{1}{c}{86.4} & \multicolumn{1}{c}{89.7} & \multicolumn{1}{c}{88.0} & \multicolumn{1}{c}{84.3}  & \multicolumn{1}{c}{85.7} & \multicolumn{1}{c}{85.0} & \multicolumn{1}{c}{77.0} & \multicolumn{1}{c}{63.4} & \multicolumn{1}{c}{69.5} \\

        \midrule

        d & \multirow{3}{*}{$\checkmark$} & \multicolumn{1}{c}{FNGen} & \multicolumn{1}{c}{81.1}  & \multicolumn{1}{c}{85.4} & \multicolumn{1}{c}{77.4} & \multicolumn{1}{c}{66.1}  & \multicolumn{1}{c}{82.2} & \multicolumn{1}{c}{73.3} & \multicolumn{1}{c}{77.4} & \multicolumn{1}{c}{85.9} & \multicolumn{1}{c}{81.4}  & \multicolumn{1}{c}{63.1} & \multicolumn{1}{c}{60.5} & \multicolumn{1}{c}{61.8} \\

        e & & \multicolumn{1}{c}{DiffusionZSL} & \multicolumn{1}{c}{80.3} & \multicolumn{1}{c}{78.6} & \multicolumn{1}{c}{69.3} & \multicolumn{1}{c}{7.9} & \multicolumn{1}{c}{70.9} & \multicolumn{1}{c}{14.3} & \multicolumn{1}{c}{16.1} & \multicolumn{1}{c}{57.8} & \multicolumn{1}{c}{25.2} & \multicolumn{1}{c}{2.7} & \multicolumn{1}{c}{37.9} & \multicolumn{1}{c}{5.0} \\

        f & & \multicolumn{1}{c}{DiffGen} & \multicolumn{1}{c}{93.5} & \multicolumn{1}{c}{87.6} & \multicolumn{1}{c}{80.5} & \multicolumn{1}{c}{90.7} & \multicolumn{1}{c}{93.9}& \multicolumn{1}{c}{92.3} & \multicolumn{1}{c}{85.5}  & \multicolumn{1}{c}{86.1} & \multicolumn{1}{c}{85.8} & \multicolumn{1}{c}{78.6} & \multicolumn{1}{c}{64.3} & \multicolumn{1}{c}{70.8} \\

        \bottomrule
	\end{tabular*}
	\label{table:comparsion_inference}
\end{table*}

We conduct an ablation study comparing three inference schemes: (1) the conventional fully-noised feature generation (FNGen), (2) the reconstruction-error classifier approach used by DiffusionZSL, and (3) our DiffGen (partial, traceable generation), as shown in Table~\ref{table:comparsion_inference}. The table shows two consistent patterns:
\begin{enumerate}
    \item  DiffGen strongly outperforms both baselines on ZSL and, crucially, GZSL. For example, when we do not use DiffTTA, DiffGen yields ZSL T1 92.9\% / 87.0\% / 80.0\% (AWA2 / CUB / SUN) vs. FNGen’s 85.5\% / 86.0\% / 77.6\% and DiffusionZSL’s 76.9\% / 77.2\% / 66.4\%; in GZSL (AWA2) DiffGen achieves U/S/H = 86.4\% / 89.7\% / 88.0\%, substantially higher than FNGen (70.5\% / 77.5\% / 73.8\%) and DiffusionZSL (6.5\% / 70.0\% / 11.9\%).
    \item DiffTTA is most beneficial when combined with DiffGen. With DiffTTA enabled, DiffGen improves further (AWA2 GZSL H from 88.0\% $\rightarrow$ 92.3\%, U from 86.4\% $\rightarrow$ 90.7\%), whereas DiffTTA alone does not rescue FNGen or fully fix DiffusionZSL’s severe seen/unseen bias. This indicates that the combination of an adaptive generator (DiffTTA) plus partially real generation (DiffGen) both raises unseen accuracy and restores seen/unseen balance. 
\end{enumerate}

\subsection{Ablation Study}
\label{sec:ablation}
We conduct the component effectiveness study in Sec.~\ref{sec:com} and hyper-parameter sensitivity in Sec.~\ref{sec:hyperpara}. We also provide a detailed ablation on our DiffGen in Sec. ~\ref{sec:ablation_diffgen}.

\begin{table*}[htbp]
	\centering
	\caption{Ablation study on five insightful components of our ZeroDiff++ and the modality of the final ZSL classifier. \( \checkmark \) and \( \times \) denote yes and no. V, C, and S denote CE-based space, SC-based space, and class-level semantic space.}
    \begin{tabular*}{\textwidth}{@{\extracolsep\fill}lcccccccccccc}
 
		\toprule
		\multirow{2}{*}{ID} & \multirow{2}{*}{Classifier} & \multicolumn{5}{{@{}c@{}}}{Component} & \multicolumn{2}{{@{}c@{}}}{AWA2} & \multicolumn{2}{c}{CUB}  & \multicolumn{2}{c}{SUN} \\
        \cmidrule{3-7}
        \cmidrule{8-9}
        \cmidrule{10-11}
        \cmidrule{12-13}
        
        & & \multicolumn{1}{c}{DiffAug} & \multicolumn{1}{c}{SC} & \multicolumn{1}{c}{$\mathcal{L}_{mu}$} & \multicolumn{1}{c}{DiffTTA} & \multicolumn{1}{c}{DiffGen} &  \multicolumn{1}{c}{T1} & \multicolumn{1}{c}{H} & \multicolumn{1}{c}{T1} & \multicolumn{1}{c}{H} & \multicolumn{1}{c}{T1} & \multicolumn{1}{c}{H} \\
        \midrule

        a & \multicolumn{1}{c}{V} & \multicolumn{1}{c}{$\times$} & \multicolumn{1}{c}{$\times$} & \multicolumn{1}{c}{$\times$} & \multicolumn{1}{c}{$\times$} & \multicolumn{1}{c}{$\times$} & \multicolumn{1}{c}{76.0} & \multicolumn{1}{c}{69.5} & \multicolumn{1}{c}{83.1} & \multicolumn{1}{c}{79.7} & \multicolumn{1}{c}{74.3} & \multicolumn{1}{c}{56.9} \\

        b & \multicolumn{1}{c}{V} & \multicolumn{1}{c}{$\checkmark$} & \multicolumn{1}{c}{$\times$} & \multicolumn{1}{c}{$\times$} & \multicolumn{1}{c}{$\times$} & \multicolumn{1}{c}{$\times$} & \multicolumn{1}{c}{80.0} & \multicolumn{1}{c}{72.5} & \multicolumn{1}{c}{83.3} & \multicolumn{1}{c}{79.8} & \multicolumn{1}{c}{75.4} & \multicolumn{1}{c}{59.5} \\

        c & \multicolumn{1}{c}{V} & \multicolumn{1}{c}{$\checkmark$} & \multicolumn{1}{c}{$\checkmark$} & \multicolumn{1}{c}{$\times$} & \multicolumn{1}{c}{$\times$} & \multicolumn{1}{c}{$\times$} & \multicolumn{1}{c}{84.4} & \multicolumn{1}{c}{76.3} & \multicolumn{1}{c}{83.6} & \multicolumn{1}{c}{80.1} & \multicolumn{1}{c}{76.1} & \multicolumn{1}{c}{59.8} \\
        
         d & \multicolumn{1}{c}{V} & \multicolumn{1}{c}{$\checkmark$} & \multicolumn{1}{c}{$\times$} & \multicolumn{1}{c}{$\checkmark$} & \multicolumn{1}{c}{$\times$} & \multicolumn{1}{c}{$\times$} & \multicolumn{1}{c}{81.3} & \multicolumn{1}{c}{73.3} & \multicolumn{1}{c}{84.1} & \multicolumn{1}{c}{80.2} & \multicolumn{1}{c}{76.2} & \multicolumn{1}{c}{59.9} \\
        
        f & \multicolumn{1}{c}{V} & \multicolumn{1}{c}{$\checkmark$} & \multicolumn{1}{c}{$\checkmark$} & \multicolumn{1}{c}{$\checkmark$} & \multicolumn{1}{c}{$\times$} & \multicolumn{1}{c}{$\times$} & \multicolumn{1}{c}{86.4} & \multicolumn{1}{c}{76.9} & \multicolumn{1}{c}{84.3} & \multicolumn{1}{c}{80.2} & \multicolumn{1}{c}{76.4} & \multicolumn{1}{c}{60.0} \\

        g & \multicolumn{1}{c}{V} & \multicolumn{1}{c}{$\checkmark$} & \multicolumn{1}{c}{$\checkmark$} & \multicolumn{1}{c}{$\checkmark$} & \multicolumn{1}{c}{$\checkmark$} & \multicolumn{1}{c}{$\times$} & \multicolumn{1}{c}{86.5} & \multicolumn{1}{c}{78.7} & \multicolumn{1}{c}{84.8} & \multicolumn{1}{c}{80.7} & \multicolumn{1}{c}{76.5} & \multicolumn{1}{c}{60.6} \\

        h & \multicolumn{1}{c}{V} & \multicolumn{1}{c}{$\checkmark$} & \multicolumn{1}{c}{$\checkmark$} & \multicolumn{1}{c}{$\checkmark$} & \multicolumn{1}{c}{$\times$} & \multicolumn{1}{c}{$\checkmark$} & \multicolumn{1}{c}{89.1} & \multicolumn{1}{c}{84.4} & \multicolumn{1}{c}{84.9} & \multicolumn{1}{c}{80.9} & \multicolumn{1}{c}{78.8} & \multicolumn{1}{c}{65.3} \\

        i & \multicolumn{1}{c}{V} & \multicolumn{1}{c}{$\checkmark$} & \multicolumn{1}{c}{$\checkmark$} & \multicolumn{1}{c}{$\checkmark$} & \multicolumn{1}{c}{$\checkmark$} & \multicolumn{1}{c}{$\checkmark$} & \multicolumn{1}{c}{91.8} & \multicolumn{1}{c}{90.6} & \multicolumn{1}{c}{85.0} & \multicolumn{1}{c}{81.0} & \multicolumn{1}{c}{79.1} & \multicolumn{1}{c}{66.1} \\

        j & \multicolumn{1}{c}{V+C} & \multicolumn{1}{c}{$\checkmark$} & \multicolumn{1}{c}{$\checkmark$} & \multicolumn{1}{c}{$\checkmark$} & \multicolumn{1}{c}{$\checkmark$} & \multicolumn{1}{c}{$\checkmark$} & \multicolumn{1}{c}{\textbf{93.5}} & \multicolumn{1}{c}{\textbf{92.3}} & \multicolumn{1}{c}{86.7} & \multicolumn{1}{c}{85.0} & \multicolumn{1}{c}{\textbf{80.2}} & \multicolumn{1}{c}{\textbf{69.6}} \\

        h & \multicolumn{1}{c}{V+C+S} & \multicolumn{1}{c}{$\checkmark$} & \multicolumn{1}{c}{$\checkmark$} & \multicolumn{1}{c}{$\checkmark$} & \multicolumn{1}{c}{$\checkmark$} & \multicolumn{1}{c}{$\checkmark$} & \multicolumn{1}{c}{90.5} & \multicolumn{1}{c}{89.6} & \multicolumn{1}{c}{\textbf{87.7}} & \multicolumn{1}{c}{\textbf{85.4}} & \multicolumn{1}{c}{79.7} & \multicolumn{1}{c}{69.0} \\

        \bottomrule
	\end{tabular*}
	\label{table:ablation}
\end{table*}

\subsubsection{Component Effectiveness}
\label{sec:com}
We examine the effect of the proposed components: \( G \), \( R \), \( D_{adv} \), \( D_{diff} \), \( D_{rep} \), and \( \mathcal{L}_{mu} \). The results are reported in Table~\ref{table:ablation}, and we could get three conclusion:
\begin{enumerate}
    \item DiffAug alone can improve the T1 of AWA2 from 76.0\% to 80.0\%. Further addition of SC and $\mathcal{L}_{mu}$ further increases the T1 to 86.4, and the H value increases accordingly. This proves the effectiveness of our three training insights on stabilizing the visual-semantic association during the training phase.
    \item While DiffTTA offers limited improvement over a single FNGen when used for generation alone, it significantly improves the seen/unseen balance when used to adapt the generator and combined with DiffGen. This indicates that fine-tuning the generator using the reconstruction loss of real test features during testing helps narrow the training/test distribution gap.
    \item DiffGen introduces a traceable and partially imaginable generation strategy, which in itself brings significant performance improvements. As shown in the table, it achieves optimal performance when used in conjunction with DiffTTA. This demonstrates the effectiveness of generation methods based on real-world noise features compared to pure noise generation, significantly narrowing the gap between generated features and real test features.
\end{enumerate}

%We observe that \( D_{diff} \) and \( \mathcal{L}_{mu} \) consistently improve the performance across the three datasets. Another observation is that, in many cases, \( G \) benefits from the inclusion of \( R \), although \( R \) does not perform better than \( G \) alone.
%It also evidences our motivation that SC-based representations could capture instance-level semantics to support feature generation.

\subsubsection{Ablation study for the number of generated features}
\label{sec:hyperpara}
For hyper-parameter sensitivity, we conclude the experiments about the number of synthesized samples for each unseen class $n_{syn}$ in  Fig.~\ref{fig:ablation_n_syn}.
The accuracy for the unseen class increases with the number of synthesized samples, and when $N_{syn}$ is large enough, the performances become stable. 
This result demonstrates that the features synthesized by our method effectively mitigate the issue of missing data for unseen classes.

\begin{figure}
    \centering
    \includegraphics[width=\linewidth]{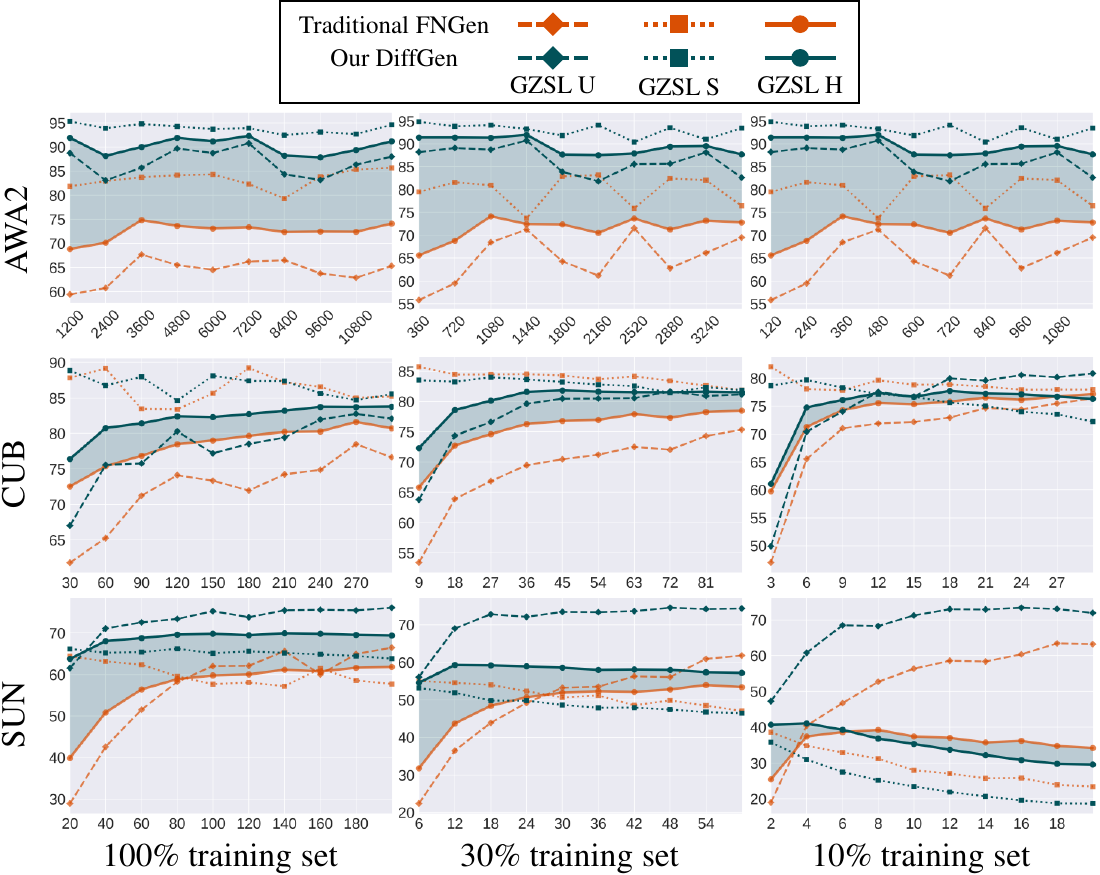}
    \caption{The effect of the number of synthetic samples $N_{syn}$. The shaded area indicates the improvement of GZSL H from our DiffGen compared to the traditional FNGen.}
    \label{fig:ablation_n_syn}
\end{figure}

\subsubsection{Ablation study for DiffGen}
\label{sec:ablation_diffgen}

We ablate the design choices of DiffGen along two axes: the type of SC (paired semantics) fed to the generator and the diffusion time $t$ used for partial denoising. Table~\ref{table:ablation_diffgen} reports results for these variants. We have three main findings.
\begin{enumerate}
    \item Real SC vs. Fake SC. Replacing the generated SC with the real, paired SC consistently improves both ZSL T1 and GZSL H across datasets. For instance, on AWA2, using the real SC at t=2 yields ZSL T1 = 92.3\% and GZSL H = 91.9\%, while the best “Fake” SC configuration reaches only H = 81.8\% (AWA2, t=1). This confirms that using paired instance-level semantics strongly tightens the visual–semantic link and reduces generation drift.
    \item A trade-off between copy and imagination. Small $t$ means DiffGen copies more of the real feature; large $t$ approaches fully-noised, fully-synthetic generation. The table shows a clear sweet-spot at intermediate $t$ (e.g., $t=2$): AWA2 H peaks at 91.9\% for real SC at $t=2$, while very small or maximal $t$ either under-utilize imagination or reintroduce distribution mismatch. Thus, partial denoising with a moderate $t$ balances realism and diversity and gives the best seen/unseen harmonic means.
    \item Randomized $t$ in DiffGen is acceptable but suboptimal. Randomizing DiffGen $t$ increases robustness to step choice and can sometimes improve individual dataset metrics by introducing a mixture of copy/imagine behaviors, but it is less consistent than fixing $t$ at the empirically chosen moderate value. In our experiments, while random $t$ occasionally matches or slightly exceeds single-metric scores on CUB or SUN, the stable best-performing configuration remains real paired SC + fixed moderate $t$ for balanced GZSL H.
\end{enumerate}

\begin{table*}[htbp]
	\centering
	\caption{Ablation study on the different diffusion time $t$ and the SC type of DiffGen (Eq.~\ref{eq:zerodiffpp_gen}).}
    \begin{tabular*}{\textwidth}{@{\extracolsep\fill}ccccccccccccccc}
 
		\toprule
        
        \multirow{3}{*}{SC type} & \multirow{3}{*}{$t$} & \multirow{3}{*}{$T$} & \multicolumn{3}{{@{}c@{}}}{ZSL} & \multicolumn{9}{c}{GZSL} \\
        \cmidrule{4-6}
        \cmidrule{7-15}
        
		& & & \multicolumn{1}{c}{AWA2} & \multicolumn{1}{c}{CUB} & \multicolumn{1}{c}{SUN} & \multicolumn{3}{{@{}c@{}}}{AWA2} & \multicolumn{3}{c}{CUB}  & \multicolumn{3}{c}{SUN} \\
        \cmidrule{4-4}
        \cmidrule{5-5}
        \cmidrule{6-6}
        \cmidrule{7-9}
        \cmidrule {10-12}
        \cmidrule{13-15}
        
        & & & \multicolumn{1}{c}{T1} & \multicolumn{1}{c}{T1} & \multicolumn{1}{c}{T1} & \multicolumn{1}{c}{U} & \multicolumn{1}{c}{S} & \multicolumn{1}{c}{H} & \multicolumn{1}{c}{U}& \multicolumn{1}{c}{S} & \multicolumn{1}{c}{H} & \multicolumn{1}{c}{U} & \multicolumn{1}{c}{S} & \multicolumn{1}{c}{H} \\
        \midrule

        \multirow{5}{*}{Fake: $\tilde{\mathbf{r}}^{u}_{0}$} & \multicolumn{1}{c}{1} & \multicolumn{1}{c}{4} & \multicolumn{1}{c}{88.9} & \multicolumn{1}{c}{87.7} & \multicolumn{1}{c}{79.2} & \multicolumn{1}{c}{76.5} & \multicolumn{1}{c}{88.0} & \multicolumn{1}{c}{81.8} & \multicolumn{1}{c}{80.7} & \multicolumn{1}{c}{84.0} & \multicolumn{1}{c}{82.3} & \multicolumn{1}{c}{71.0} & \multicolumn{1}{c}{61.3} & \multicolumn{1}{c}{65.8} \\
        
       & \multicolumn{1}{c}{2} & \multicolumn{1}{c}{4} & \multicolumn{1}{c}{87.1} & \multicolumn{1}{c}{87.3} & \multicolumn{1}{c}{79.9} & \multicolumn{1}{c}{68.7} & \multicolumn{1}{c}{85.9} & \multicolumn{1}{c}{76.3} & \multicolumn{1}{c}{81.0} & \multicolumn{1}{c}{84.1} & \multicolumn{1}{c}{82.6}  & \multicolumn{1}{c}{68.9} & \multicolumn{1}{c}{58.7} & \multicolumn{1}{c}{63.4} \\

        & \multicolumn{1}{c}{3} & \multicolumn{1}{c}{4} & \multicolumn{1}{c}{81.4} & \multicolumn{1}{c}{87.6} & \multicolumn{1}{c}{75.4} & \multicolumn{1}{c}{62.4} & \multicolumn{1}{c}{87.0} & \multicolumn{1}{c}{72.7} & \multicolumn{1}{c}{82.0} & \multicolumn{1}{c}{81.8} & \multicolumn{1}{c}{81.9} & \multicolumn{1}{c}{64.9} & \multicolumn{1}{c}{58.1} & \multicolumn{1}{c}{61.3} \\

        & \multicolumn{1}{c}{4} & \multicolumn{1}{c}{4} & \multicolumn{1}{c}{82.5} & \multicolumn{1}{c}{87.3} & \multicolumn{1}{c}{75.4} & \multicolumn{1}{c}{62.4} & \multicolumn{1}{c}{86.4} & \multicolumn{1}{c}{72.5} & \multicolumn{1}{c}{78.7} & \multicolumn{1}{c}{84.3} & \multicolumn{1}{c}{81.4} & \multicolumn{1}{c}{62.0} & \multicolumn{1}{c}{57.9} & \multicolumn{1}{c}{59.9} \\

        & \multicolumn{1}{c}{Random} & \multicolumn{1}{c}{4} & \multicolumn{1}{c}{86.7} & \multicolumn{1}{c}{87.7} & \multicolumn{1}{c}{77.3} & \multicolumn{1}{c}{73.5} & \multicolumn{1}{c}{87.7} & \multicolumn{1}{c}{80.0} & \multicolumn{1}{c}{80.5} & \multicolumn{1}{c}{85.3} & \multicolumn{1}{c}{82.8} & \multicolumn{1}{c}{66.3} & \multicolumn{1}{c}{62.1} & \multicolumn{1}{c}{64.1} \\

        \midrule
        
        \multirow{5}{*}{Real: $\mathbf{r}^{u}_{0}$} & \multicolumn{1}{c}{1} & \multicolumn{1}{c}{4} & \multicolumn{1}{c}{91.0} & \multicolumn{1}{c}{87.7} & \multicolumn{1}{c}{79.5} & \multicolumn{1}{c}{88.6} & \multicolumn{1}{c}{93.9} & \multicolumn{1}{c}{91.2} & \multicolumn{1}{c}{84.2}  & \multicolumn{1}{c}{86.1} & \multicolumn{1}{c}{85.1} & \multicolumn{1}{c}{78.3} & \multicolumn{1}{c}{63.3} & \multicolumn{1}{c}{70.0} \\

       & \multicolumn{1}{c}{2} & \multicolumn{1}{c}{4} & \multicolumn{1}{c}{92.3} & \multicolumn{1}{c}{87.3} & \multicolumn{1}{c}{79.8} & \multicolumn{1}{c}{89.8} & \multicolumn{1}{c}{94.2} & \multicolumn{1}{c}{91.9} & \multicolumn{1}{c}{83.9}  & \multicolumn{1}{c}{85.9} & \multicolumn{1}{c}{84.9} & \multicolumn{1}{c}{78.0} & \multicolumn{1}{c}{63.4} & \multicolumn{1}{c}{69.9} \\

        & \multicolumn{1}{c}{3} & \multicolumn{1}{c}{4} & \multicolumn{1}{c}{92.1} & \multicolumn{1}{c}{87.6} & \multicolumn{1}{c}{80.0} & \multicolumn{1}{c}{88.8} & \multicolumn{1}{c}{94.1} & \multicolumn{1}{c}{91.4} & \multicolumn{1}{c}{85.1}  & \multicolumn{1}{c}{84.6} & \multicolumn{1}{c}{84.9} & \multicolumn{1}{c}{78.6} & \multicolumn{1}{c}{63.9} & \multicolumn{1}{c}{70.5} \\

        & \multicolumn{1}{c}{4} & \multicolumn{1}{c}{4} & \multicolumn{1}{c}{91.7} & \multicolumn{1}{c}{87.3} & \multicolumn{1}{c}{80.2} & \multicolumn{1}{c}{88.6} & \multicolumn{1}{c}{94.1} & \multicolumn{1}{c}{91.3} & \multicolumn{1}{c}{84.1}  & \multicolumn{1}{c}{85.9} & \multicolumn{1}{c}{85.0} & \multicolumn{1}{c}{77.9} & \multicolumn{1}{c}{63.3} & \multicolumn{1}{c}{69.8} \\

        & \multicolumn{1}{c}{Random} & \multicolumn{1}{c}{4} & \multicolumn{1}{c}{92.2} & \multicolumn{1}{c}{87.7} & \multicolumn{1}{c}{78.6} & \multicolumn{1}{c}{88.9} & \multicolumn{1}{c}{94.6} & \multicolumn{1}{c}{91.6} & \multicolumn{1}{c}{84.4}  & \multicolumn{1}{c}{86.4} & \multicolumn{1}{c}{85.4}  & \multicolumn{1}{c}{76.7} & \multicolumn{1}{c}{63.5} & \multicolumn{1}{c}{69.5} \\

        \bottomrule
	\end{tabular*}
	\label{table:ablation_diffgen}
\end{table*}

\subsection{Mutual Learning Effectiveness}
\label{sec:mu_effect}
To further verify the effect of our \( \mathcal{L}_{mu} \), we designed an additional experiment to show the change in critic score.
Due to the page limitation, we provide the results and discussion in Appendix~\ref{app:mu_effect}.

\subsection{Visualization Analysis}

\subsubsection{Feature TSNE Visualization}

\begin{figure}
    \centering
    \includegraphics[width=\linewidth]{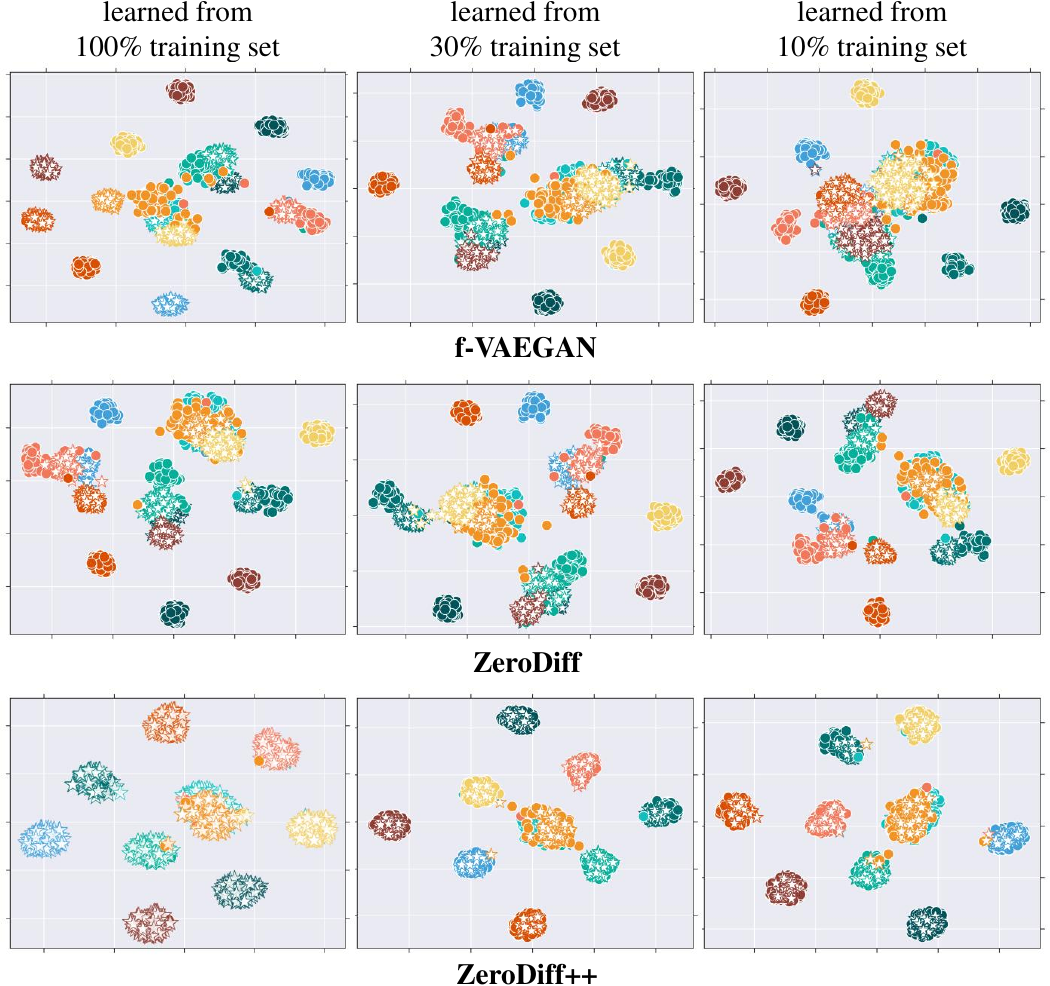}
    \caption{Qualitative evaluation with t-SNE visualization on AWA2. We randomly selected 100 generated samples per class and real samples for 10 unseen classes. We use different colors to denote classes, and use solid \( \bullet \) and hollow \(  \bigstar \)  to denote the real and synthesized sample features, respectively (Best Viewed in Color).
}
    \label{fig:vis_feat}
\end{figure}
We also provide a qualitative comparison between the baseline f-VAEGAN, ZeroDiff, and our ZeroDiff++ using t-SNE visualization of the real and synthesized sample features in Fig.~\ref{fig:vis_feat}.
The visualization shows that as the number of training samples decreases, f-VAEGAN gradually fails to generate unseen classes fully.
The features generated by fvaegan using 10\% of the training data clustered together and could not be classified at all.
Zerodiff, on the other hand, has relative robustness and generates features with good class separation, but it differs significantly from the true features.
Finally, the features generated by our zerodiff++ are not only highly robust, achieving class separation with only 10\% training time, but also closely approximate the true feature distribution. This further highlights the role of DiffTTA (unseen class adaptation) and DiffGen (real-feature-based generation).

\begin{figure}
    \centering
    \includegraphics[width=\linewidth]{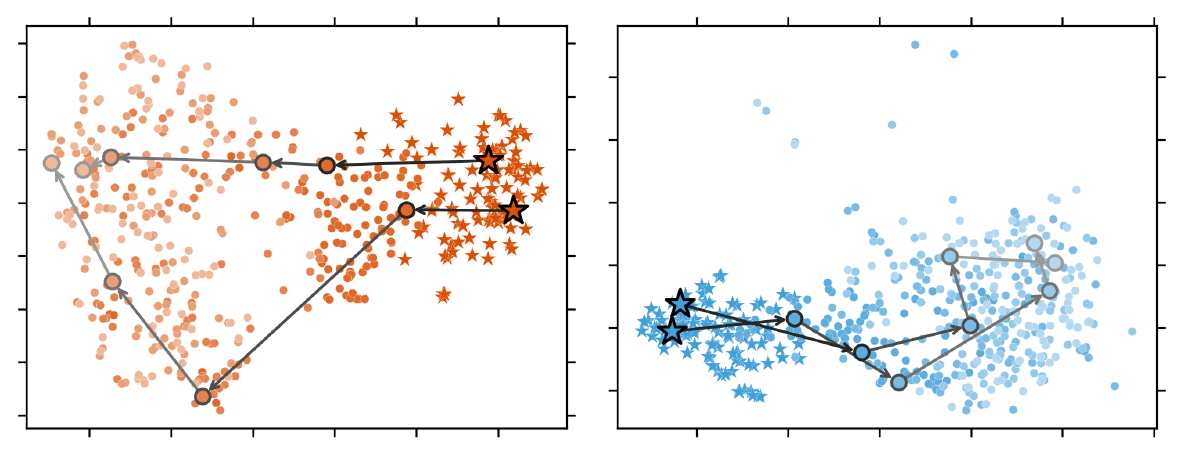}
    \caption{The visualization for the traceable generation path of DiffGen on two unseen classes from AWA2. We use different colors to denote classes, and use  \( \star \)  and \( \bullet \) to denote the real and synthesized sample features, respectively. The lighter the color, the larger the diffusion step, resulting in a greater distance from the original features (Best Viewed in Color).}
    \label{fig:vis_diffgen}
\end{figure}

\subsubsection{Traceable Generation Path}
We also visualize the generation path of our DiffGen that connects each synthesized feature to a specific real test feature sample via diffusing the real feature at diffusion time $t$ and denoising it partially.
The traceability figure is Fig.~\ref{fig:vis_diffgen}, which means a smaller diffusion time $t$ follows a denoising trajectory toward real features.
It shows that darker colored points indicate longer diffusion times, a lower proportion of real features in the generated features, and a greater distance between the generated feature points and the original feature points. This not only further verifies the partial feature imagination effectiveness of our DiffGen , i.e., injecting real features can effectively alleviate the mismatch between generated and real features, but also provides a traceable generation pattern, indicating that the generated points smoothly transition from clean to noise along the chain, thus facilitating error analysis and selective filtering.

\section{Theoretical Perspective}
\label{sec:theory}

\subsection{Theory of Overlap Mass}

\subsubsection{Overlap Mass Definition}\label{sec:overlapmass}

Recall the forward diffusion process (Eq.~\ref{eq:diff_forward_chain} and \ref{eq:diff_forward_chain_step}).
Due to the Markov property of the forward process, we have the marginal distribution of $\mathbf{x}_t$ given the initial clean data $\mathbf{x}_0$:
\begin{equation}
\label{eq:x0_to_xt}
q(\mathbf{x}_t|\mathbf{x}_{0}) = \mathcal{N}(\mathbf{x}_{t}; \sqrt{\bar{\alpha}} \mathbf{x}_{0}, (1-\bar{\alpha}) \bm{I}),
\end{equation}
where we denote $\alpha_t:=1-\beta_t$ and $\bar{\alpha}_{t} := \prod^{t}_{t^\prime=1} \alpha_{t^\prime}$.

\begin{figure}
\centering
\includegraphics[width=0.8\linewidth]{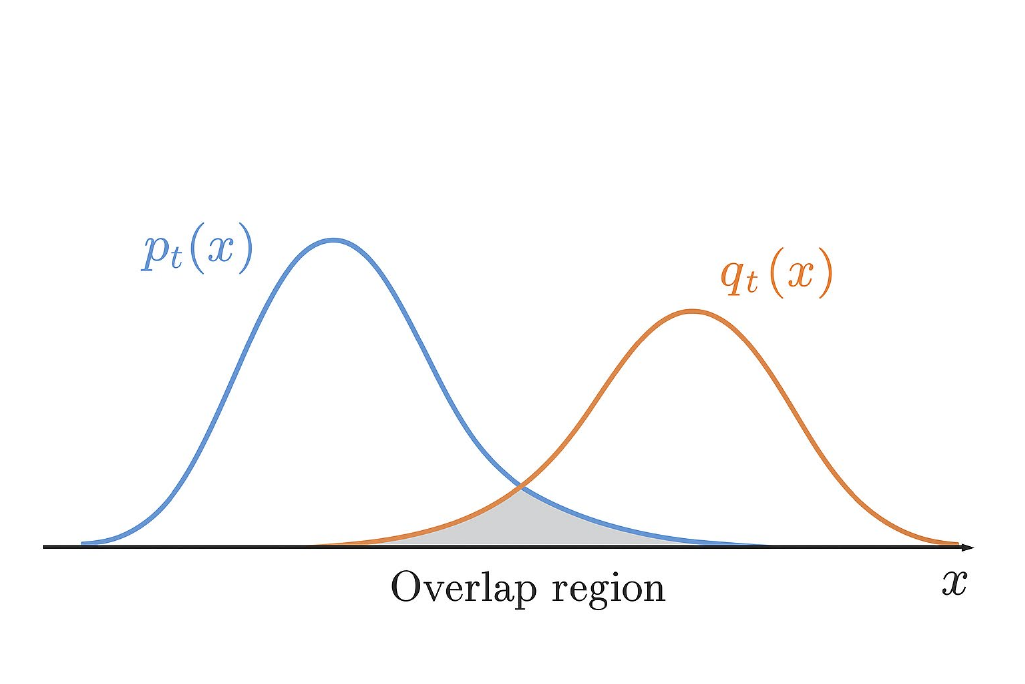} 
\caption{An illustration about the overlap mass under 1-D data.}
\label{fig:overlap_prob}
\end{figure}

We can define the \textbf{overlap mass} as
\begin{equation}
\boxed{\;
\mathcal{M}(t)\;=\;\int\min\{p_t(x),q_t(x)\}\,dx\;.}
\end{equation}
Intuitively, as shown in Fig.~\ref{fig:overlap_prob}, we can use $\min\{p_t(x),q_t(x)\}$ represent the probability density of the point $x$ being covered by two distributions simultaneously, and integrate $\int dx$ accumulates the common coverage probability of each point to obtain the total overlap mass, which is the sum of the probabilities of the overlapping regions of the two distributions.
If $\mathcal{M}(t)=0$ (i.e., two distributions are disjoint supports), the discriminator can perfectly separate the two distributions and relevant divergences (e.g., Jensen–Shannon Divergence, JSD) may saturate, causing generator gradients to vanish. In contrast, $\mathcal{M}(t)>0$ implies that there exist ambiguous or \emph{middle} samples where both distributions place positive mass, and the discriminator cannot saturate everywhere. Prior works note that smoothing by noise produces such overlap and prevents JSD saturation\cite{arjovsky2017towards, wang2022diffusion}.

\subsubsection{Overlap Mass Under One-dimensional Case}

To obtain an analytic expression for intuition, consider the 1D Dirac case where the two initial point masses lie at $0$ and $m$ (i.e., $p_0=\delta_0$ and $q_0=\delta_m$), respectively. After $t$ steps of the diffusion defined above, these map to Gaussians
\begin{align}
p_t(x)&= \mathcal{N}\bigl(x;0,\;\sigma^2 = 1-\bar{\alpha}_{t}\bigr),  \\
q_t(x)&= \mathcal{N}\bigl(x;\sqrt{\bar{\alpha}_t}m,\;\sigma^2 = 1-\bar{\alpha}_{t}\bigr).
\end{align}
The two density curves intersect at \(x=\tfrac{1}{2}\sqrt{\bar{\alpha}_t}m\).
The overlap mass has the closed form
\begin{align}
\mathcal{M}(t)
& = \int \min\{p_t(x),q_t(x)\}\,\mathrm{d}x \nonumber \\
& = 2\Bigl(1-\Phi\!\Bigl(\frac{|m|\sqrt{\bar{\alpha}_t}}{2\sqrt{1-\bar{\alpha}_t}}\Bigr)\Bigr)  \\
& = 2\Phi\!\Bigl(-\frac{|m|\sqrt{\bar{\alpha}_t}}{2\sqrt{1-\bar{\alpha}_t}}\Bigr)
= 2\Phi\Bigl(-\frac{|m|\sqrt{\bar{\alpha}_t}}{2\sigma}\Bigr),  \nonumber
\end{align}
where \(\Phi(\cdot)\) denotes the standard normal Cumulative Distribution Function (CDF).
This formula indicates that as the diffusion degree increases (\(\bar{\alpha}_t \to 0\)), the distributions become closer to each other and \(\mathcal{M}(t) \to 2\Phi(0)=1\). When the noise is small (\(\bar{\alpha}_t \approx 1\)) or the initial separation \(|m|\) is large, the overlap \(\mathcal{M}(t)\) is near zero, which corresponds to gradient saturation.

\subsubsection{Overlap mass in the multi-dimensional case}
Now consider the case in \(\mathbb{R}^d\).
Assume the two post-diffusion marginals are Gaussians with means \(\mu_{p},\mu_{q}\) and common isotropic covariance \((1-\bar{\alpha}_{t})I\):
\begin{equation}
    p_t = \mathcal{N}(\mu_p,(1-\bar{\alpha}_t)I),\qquad q_t = \mathcal{N}(\mu_q,(1-\bar{\alpha}_t)I).
\end{equation}
Let \(\Delta_{\mu} = \mu_p - \mu_q\).
The Mahalanobis distance $D$ squared between the means is
\begin{equation}
    D^2 = \Delta_{\mu}^\top\bigl[(1-\bar{\alpha}_t)I\bigr]^{-1}\Delta_\mu
    = \frac{\|\Delta_\mu\|^2}{1-\bar{\alpha}_{t}}.
\end{equation}
Because \(p_t\) and \(q_t\) share the same covariance, the Bhattacharyya Coefficient (BC) admits the closed form
\begin{equation}
    BC := \int\sqrt{p_t(x)q_t(x)}\,\mathrm{d}x = \exp\!\Bigl(-\frac{1}{8}D^2\Bigr).
\end{equation}
Noting \(\sqrt{p_t q_t} = \sqrt{\min\{p_t,q_t\}\cdot\max\{p_t,q_t\}}\), by Cauchy–Schwarz we have
\begin{align}
    BC &= \int\sqrt{\min\{p_t,q_t\}\cdot\max\{p_t,q_t\}}\,\mathrm{d}x \\
    &\le \sqrt{\Bigl(\int\min\{p_t,q_t\}\Bigr)\Bigl(\int\max\{p_t,q_t\}\Bigr)}.
\end{align}
Let \(\mathcal{M}(t):=\int\min\{p_t,q_t\}\). Then \(\int\max\{p_t,q_t\}=2-\mathcal{M}(t)\), and hence
\begin{equation}
    BC^2 \le \mathcal{M}(t)(2-\mathcal{M}(t)) \quad\Longrightarrow\quad ( \mathcal{M}(t)-1 )^2 \le 1-BC^2.
\end{equation}
Solving for \(\mathcal{M}(t)\) gives the exact (but slightly implicit) lower bound
\begin{equation}
    \mathcal{M}(t) \ge 1 - \sqrt{1-BC^2}.
\end{equation}
Using the elementary inequality \(\sqrt{1-a}\le 1-\tfrac{a}{2}\) for \(a\in[0,1]\) and substituting \(BC^2=\exp(-D^2/4)\) yields the simpler conservative bound
\begin{equation}
    \boxed{\;
    \mathcal{M}(t) \ge \tfrac{1}{2}\exp\!\Bigl(-\frac{D^2}{4}\Bigr)
    \;=\; \tfrac{1}{2}\exp\!\Bigl(-\frac{\|\Delta_\mu\|^2}{4(1-\bar{\alpha}_t)}\Bigr)\;.}
\end{equation}

If the clean mean difference is \(\Delta_0\), then the noised mean difference is \(\Delta_\mu=\sqrt{\bar{\alpha}_t}\,\Delta_0\), giving
\begin{equation}
    D^2 = \frac{\bar{\alpha}_t \|\Delta_0\|^2}{1-\bar{\alpha}_t},
\end{equation}
and therefore
\begin{equation}
    \mathcal{M}(t) \ge \tfrac{1}{2}\exp\!\Bigl(-\frac{\bar{\alpha}_t\|\Delta_0\|^2}{4(1-\bar{\alpha}_t)}\Bigr).
\end{equation}
This inequality highlights that increasing diffusion (reducing \(\bar{\alpha}_t\)) reduces the effective Mahalanobis distance and increases the lower bound of overlap mass. In the limit \(t\to T\) (so \(\bar{\alpha}_t\to 0\)), the overlap lower bound tends to a constant and the actual \(\mathcal{M}(t)\to 1\).

\subsubsection{Implications for discriminator gradients}
The overlap bounds above explain why diffusion-based discriminators stabilize GAN training. Whenever $p_t$ and $q_t$ overlap, the optimal discriminator at time $t$ is
\(
    D^*_{diff} = \frac{p_t(x)}{p_t(x)+q_t(x)},
\)
which lies strictly between 0 and 1 on any point $x$ where both densities are positive.
Therefore, loss derivatives remain nonzero and provide meaningful feedback to the generator parameters.
From the overlap-mass perspective developed above, at each diffusion step $t$, the real and fake marginals necessarily share support, so the discriminator loss is a smooth, well-behaved function of the generator.
These overlap guarantees imply that discriminator gradients do not vanish under diffusion-based smoothing, theoretically proving that diffusion-based discriminators are a remedy for disjoint-support failure modes in GAN training.

% =========================
% Theory Supplement for Overlap Mass (rewritten to match current notation)
% =========================
% This subsection is designed to directly support and complement
% Sec. "Theory of Overlap Mass" under the same diffusion notation
% (beta_t, alpha_t, \bar{\alpha}_t, q(v_t|v_{t-1}), q(v_t|v_0)).

\subsection{Diffusion marginalization and overlap mass: an information-theoretic complement}
\label{sec:theory_overlap_kl}

In Sec.~\ref{sec:overlapmass}, we analyzed the \emph{overlap mass}
\begin{equation}
\mathcal{M}(t)\;=\;\int \min\{p_t(x),q_t(x)\}\,dx,
\end{equation}
under simplified distributional assumptions, showing that the forward diffusion
process increases distributional overlap. Here we provide a more general
information-theoretic complement: the same diffusion marginalization
\emph{contracts distribution discrepancy} in the KL divergence, which supports the
overlap-mass intuition beyond specific parametric forms.

\paragraph{Common diffusion marginalization under the DDPM forward process.}
Consider the forward diffusion process used throughout this paper:
\begin{align}
q(\mathbf{v}_{1:T}|\mathbf{v}_0) &= \prod_{t=1}^{T} q(\mathbf{v}_{t}|\mathbf{v}_{t-1}),q(\mathbf{v}_t|\mathbf{v}_{t-1}) \nonumber \\
&= \mathcal{N}\!\left(\mathbf{v}_{t}; \sqrt{1-\beta_{t}}\mathbf{v}_{t-1}, \beta_{t} \bm{I}\right),
\end{align}
with $\alpha_t:=1-\beta_t$ and $\bar{\alpha}_{t} := \prod^{t}_{s=1} \alpha_{s}$.
By the Markov property, the marginal distribution admits a closed form
\begin{equation}
\label{eq:kl_supp_x0_to_xt}
q(\mathbf{v}_t|\mathbf{v}_{0}) = \mathcal{N}\!\left(\mathbf{v}_{t}; \sqrt{\bar{\alpha}_t}\mathbf{v}_{0}, (1-\bar{\alpha}_t)\bm{I}\right).
\end{equation}

Let $p_0$ and $q_0$ denote two probability measures over clean features $\mathbf{v}_0$
(e.g., real and generated feature distributions). We define their diffused
marginals at step $t$ by pushing forward through the same kernel
$q(\mathbf{v}_t|\mathbf{v}_0)$:
\begin{align}
&p_t(\mathbf{v}_t) := \int q(\mathbf{v}_t|\mathbf{v}_0)\,p_0(\mathbf{v}_0)\,d\mathbf{v}_0,
\qquad \\
&q_t(\mathbf{v}_t) := \int q(\mathbf{v}_t|\mathbf{v}_0)\,q_0(\mathbf{v}_0)\,d\mathbf{v}_0.
\end{align}
Equivalently, if $\mathbf{v}_t=\sqrt{\bar{\alpha}_t}\mathbf{v}_0+\sqrt{1-\bar{\alpha}_t}\,\bm{\epsilon}$
with $\bm{\epsilon}\sim\mathcal{N}(0,I)$, then $p_t$ and $q_t$ are the corresponding
$t$-marginal laws.

\paragraph{KL contraction under diffusion marginalization.}
A fundamental result in \cite{raginsky2016strong,luo2023training,wang2025uni} shows that marginalization along a
common diffusion process is a \emph{contraction mapping} for KL divergence.
Specialized to the forward diffusion marginalization above, we have
\begin{equation}
\label{eq:kl_contraction_overlap_new}
D_{\mathrm{KL}}(p_t \,\|\, q_t)\ \le\ D_{\mathrm{KL}}(p_0 \,\|\, q_0),\quad \forall\, t\ge 0.
\end{equation}
Moreover, in continuous-time diffusion formulations, the KL evolution admits a
dissipation identity of the form \cite{luo2023training}:
\begin{align}
\label{eq:kl_dissipation_overlap_new}
&\frac{d}{dt} D_{\mathrm{KL}}(p_t\|q_t)  =  \\
&
-\frac{1}{2}\,
\mathbb{E}_{\mathbf{v}_t\sim p_t}
\!\left[
G^2(t)\,
\bigl\|
\nabla_{\mathbf{v}_t} \log p_t(\mathbf{v}_t)
-
\nabla_{\mathbf{v}_t} \log q_t(\mathbf{v}_t)
\bigr\|_2^2
\right]
\le 0,\nonumber 
\end{align}
showing that the distributional discrepancy monotonically decreases along the
diffusion trajectory.

\paragraph{Connection to overlap mass.}
Overlap mass $\mathcal{M}(t)$ measures similarity from a geometric viewpoint,
whereas the KL divergence quantifies discrepancy from an information-theoretic
viewpoint. The contraction in \eqref{eq:kl_contraction_overlap_new} implies that
the forward diffusion kernel in \eqref{eq:kl_supp_x0_to_xt} progressively smooths
both $p_0$ and $q_0$, removing sharp mismatches and reducing their information
gap. Consequently, the diffused marginals $p_t$ and $q_t$ become increasingly
entangled in support, which is consistent with the overlap-mass increase derived
in Sec.~\ref{sec:overlapmass}. Importantly, unlike the Gaussian single-mode
Assumptions used for closed-form overlap calculations, the KL contraction result
holds under much milder regularity conditions.

\paragraph{Implication for diffusion-augmented discrimination.}
In ZeroDiff++, discriminators compare real and generated features at multiple
diffusion steps. Equations \eqref{eq:kl_contraction_overlap_new}--\eqref{eq:kl_dissipation_overlap_new}
indicate that larger diffusion steps correspond to regimes where distributional
discrepancy is provably smaller, preventing discriminators from exploiting
low-overlap artifacts in the clean feature space. Thus, diffusion augmentation
stabilizes discriminative learning by enforcing alignment across progressively
higher-overlap (lower-discrepancy) regimes, complementing and strengthening the
overlap-mass analysis in Sec.~\ref{sec:overlapmass}.

\subsection{Theory of Overfitting Reducing}

\subsubsection{Notation.}
Now, we prove the reduction of discriminator overfitting caused by the diffusion process.
We write $P_X := \mathcal L(X)$ for the probability measure of a random variable $X$ and $p_x$ is the probability density function of $x$ (so $p_x=\mathrm dP/\mathrm d x$).
For a measurable map $T:\mathbb R^d\to\mathbb R^d$ and measure $P$ we write $T_{\#}P$ for the pushforward of $P$ by $T$.
The Lipschitz semi-norm is denoted
\[
\|f\|_{\mathrm{Lip}} := \sup_{x\neq y}\frac{|f(x)-f(y)|}{\|x-y\|},
\]
and we say that $f$ is $L$-Lipschitz when $\|f\|_{\mathrm{Lip}}\le L$.
We use the Euclidean norm $\|\cdot\|$ on $\mathbb R^d$.

\subsubsection{Wasserstein Distance Contraction}

Let $P_{tr},P_{val}$ denote the probability measures for the clean training set and the clean validation set, respectively. Assume they have finite first moments: $\int \|x\|\,\mathrm{d}P_{*}(x)<\infty$. Fix a diffusion step $t$ and write $\bar\alpha_t=\prod_{s=1}^t(1-\beta_s)\in(0,1]$. Define the forward-diffused random variable
$ x_t=\sqrt{\bar\alpha_t}\,x_0 + \sqrt{1-\bar\alpha_t}\,z$ and $ z\sim\mathcal N(0,I)$,
where $x_0$ is a clean sample. Denote by $\nu:=\mathcal N(0,(1-\bar\alpha_t)I)$. The probability measures of the diffused data distribution are then
\[
P_{tr,t} := P_{tr} * \nu,\qquad P_{val,t} := P_{val} * \nu,
\]
i.e.\ convolution of the clean laws with the same Gaussian kernel. We consider discriminator functions $D:\mathbb R^d\to\mathbb R$ that are approximately $1$-Lipschitz since we use the WGAN-GP gradient penalty, which enforces a near-Lipschitz constraint. We assume discriminators are trained sufficiently on their respective training sets so that training error is negligible.

\begin{shaded}
\begin{lemma}[Kantorovich--Rubinstein duality]\label{lem:KR}
For any probability measures $P,Q$ with finite first moments,
\[
W_1(P,Q) = \sup_{\|f\|_{\mathrm{Lip}}\le 1}\Big\{E_{P}[f]-E_{Q}[f]\Big\}.
\]
\end{lemma}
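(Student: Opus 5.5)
This is the classical Kantorovich--Rubinstein theorem, so the plan is to establish the two inequalities separately. The ``$\ge$'' direction is elementary; the ``$\le$'' direction rests on the general Kantorovich duality for the metric cost $c(x,y)=\|x-y\|$. Throughout, $W_1(P,Q)=\inf_{\pi\in\Pi(P,Q)}\int\|x-y\|\,\mathrm{d}\pi(x,y)$, where $\Pi(P,Q)$ is the set of couplings with marginals $P$ and $Q$. The finite first moments of $P$ and $Q$ guarantee that this infimum is finite. They also make every $1$-Lipschitz $f$ integrable, since $|f(x)|\le|f(0)|+\|x\|$.

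\textbf{Easy direction.} Fix a $1$-Lipschitz $f$ and any coupling $\pi\in\Pi(P,Q)$. Then
\[
E_P[f]-E_Q[f]=\int \bigl(f(x)-f(y)\bigr)\,\mathrm{d}\pi(x,y)\le\int\|x-y\|\,\mathrm{d}\pi(x,y).
\]
Taking the infimum over $\pi$ and then the supremum over $f$ gives $\sup_f\{E_P[f]-E_Q[f]\}\le W_1(P,Q)$.

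\textbf{Hard direction.} First I would invoke the general Kantorovich duality for a lower semicontinuous cost that is bounded below:
\[
\inf_{\pi\in\Pi(P,Q)}\int c\,\mathrm{d}\pi=\sup\Bigl\{\int\varphi\,\mathrm{d}P+\int\psi\,\mathrm{d}Q:\ \varphi(x)+\psi(y)\le c(x,y)\Bigr\}.
\]
The supremum can be restricted to pairs $(\varphi,\varphi^c)$, where $\varphi^c(y)=\inf_x\{c(x,y)-\varphi(x)\}$ is the $c$-transform. This restriction holds because replacing $\psi$ by $\varphi^c$ and then $\varphi$ by $\varphi^{cc}$ can only increase the objective.

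For $c=\|x-y\|$, the function $\varphi^c$ is an infimum of $1$-Lipschitz functions of $y$, so it is $1$-Lipschitz whenever it is finite. For a $1$-Lipschitz $g$ one has $g^c=-g$: taking $x=y$ gives $g^c\le -g$, and the Lipschitz bound $-g(x)\ge -g(y)-\|x-y\|$ gives $g^c\ge -g$.

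Hence each admissible pair can be improved to one of the form $(\varphi^{cc},-\varphi^{cc})$, with $f:=\varphi^{cc}$ being $1$-Lipschitz. The dual value is therefore at most $\sup_{\|f\|_{\mathrm{Lip}}\le1}\{E_P[f]-E_Q[f]\}$, which gives $W_1(P,Q)\le\sup_f\{E_P[f]-E_Q[f]\}$.

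\textbf{Main obstacle.} The difficult step is the absence of a duality gap in the general Kantorovich duality on the non-compact space $\mathbb R^d$. I would first prove it for compactly supported (or finitely supported) measures, using Fenchel--Rockafellar duality or a minimax argument on the compact set $\Pi(P,Q)$. I would then pass to general $P,Q$ with finite first moments by a truncation and tightness argument. The first-moment assumption controls the tails of $\int\|x-y\|\,\mathrm{d}\pi$, and $1$-Lipschitz potentials normalized by $f(0)=0$ form an equicontinuous family, so Arzel\`a--Ascoli yields limits of near-optimal potentials. Since this is textbook material (Villani, \emph{Optimal Transport: Old and New}, Thm.~5.10 and Rem.~6.5), in the paper I would cite it rather than reprove it.
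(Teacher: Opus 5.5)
Your proposal is correct. Like the paper, which gives no proof and only invokes the classical Kantorovich--Rubinstein theorem, it ultimately rests on the textbook duality result; you add a self-contained easy direction and reduce the hard direction via $c$-transforms to Villani's Theorem~5.10. Note that downstream (in Theorem~\ref{thm:main}) the paper only uses $\bigl|E_P[f]-E_Q[f]\bigr|\le W_1(P,Q)$ for $1$-Lipschitz $f$, which your coupling argument (applied to $\pm f$) already proves, so the no-duality-gap step you defer to the literature is never actually needed.
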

\end{shaded}

\begin{proof}[Comment]
This is the classical Kantorovich--Rubinstein duality. We use it to upper-bound expectation differences of any $1$-Lipschitz function by $W_1$.
\end{proof}

\begin{shaded}
\begin{theorem}[W-distance Contraction]\label{thm:W-contraction}
Let $S(x):=\sqrt{\bar\alpha_t}\,x$ be the linear scaling map. Define $P':=S_{\#}P$ and $Q':=S_{\#}Q$ for any probability measures $P,Q$ with finite first moments. Let $\nu=\mathcal N(0,(1-\bar\alpha_t)I)$. Then
\[
W_1(P*\nu,\;Q*\nu)\;\le\; W_1(P',Q') \;=\; \sqrt{\bar\alpha_t}\,W_1(P,Q).
\]
\end{theorem}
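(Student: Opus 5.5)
The plan is a direct coupling argument, followed by a scaling identity for $W_1$. First, a point of interpretation. In the diffusion setting the diffused law of $x_t=\sqrt{\bar\alpha_t}\,x_0+\sqrt{1-\bar\alpha_t}\,z$ with $x_0\sim P$ is $S_{\#}P*\nu=P'*\nu$, not $P*\nu$. I will therefore prove the inequality with the diffused measures understood as $P'*\nu$ and $Q'*\nu$, consistent with the definition of $P_{tr,t}$ above. The same argument also gives the weaker literal bound $W_1(P*\nu,Q*\nu)\le W_1(P,Q)$.

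\textbf{Step 1 (convolution is $W_1$-nonexpansive).} I will show $W_1(P'*\nu,Q'*\nu)\le W_1(P',Q')$.
\begin{itemize}
\item Take any coupling $\pi$ of $(P',Q')$ and an independent $z'\sim\nu$.
\item Set $X=U+z'$ and $Y=V+z'$ with $(U,V)\sim\pi$. Then $X\sim P'*\nu$ and $Y\sim Q'*\nu$, so this is an admissible coupling of the diffused laws.
\item Since the shared noise cancels, $\|X-Y\|=\|U-V\|$.
\item Hence $W_1(P'*\nu,Q'*\nu)\le \mathbb E_\pi\|U-V\|$, and taking the infimum over $\pi$ gives the claim.
\end{itemize}
Equivalently, via Lemma~\ref{lem:KR}: for $f$ 1-Lipschitz, $f*\check\nu$ is also 1-Lipschitz, and $E_{P'*\nu}[f]=E_{P'}[f*\check\nu]$. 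Either route works. I prefer the coupling route because it avoids checking measurability of the convolved test function.

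Finite first moments of $P'*\nu$ follow from those of $P$ together with the Gaussian moment of $\nu$. A minor technical point is the existence of an optimal coupling. I will sidestep it by working with $\varepsilon$-optimal couplings.

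\textbf{Step 2 (scaling identity).} I will show $W_1(S_{\#}P,S_{\#}Q)=\sqrt{\bar\alpha_t}\,W_1(P,Q)$. Since $S$ is a bijection for $\bar\alpha_t>0$, couplings of $(P,Q)$ correspond bijectively to couplings of $(P',Q')$ via $(S\times S)_{\#}$. Under this correspondence the cost scales as $\|Sx-Sy\|=\sqrt{\bar\alpha_t}\|x-y\|$, so the infima scale by exactly $\sqrt{\bar\alpha_t}$. If $\bar\alpha_t=0$ were allowed, both sides would be $0$, but the setup has $\bar\alpha_t\in(0,1]$.

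\textbf{Combining and the main obstacle.} Chaining Step 1 and Step 2 gives the theorem. Neither step is technically hard. The real obstacle is the interpretive mismatch between $P*\nu$ and $P'*\nu$ identified above. I will handle it by stating explicitly that the diffused law is $P'*\nu$, which is exactly the law of $x_t$ defined earlier. This is what makes the $\sqrt{\bar\alpha_t}$ factor, and thus the strict contraction for $t\ge1$, meaningful in the subsequent overfitting bound.
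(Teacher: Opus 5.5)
Your proposal is correct and is essentially the paper's proof: the paper likewise pushes couplings forward by $S\times S$ to get the $\sqrt{\bar\alpha_t}$ factor, uses the shared-noise coupling $(U+Z,V+Z)$ for the convolution step, and ends by ``noting that $P_t=(S)_{\#}P*\nu$''. That is, it silently makes the same $P*\nu\to S_{\#}P*\nu$ reinterpretation you make explicit, and the reinterpretation is genuinely needed: for point masses $\delta_0,\delta_m$ the literal left side equals $\|m\|>\sqrt{\bar\alpha_t}\,\|m\|$ when $\bar\alpha_t<1$, $m\neq 0$. The only difference is in your favor: your bijection-of-couplings argument proves the equality $W_1(P',Q')=\sqrt{\bar\alpha_t}\,W_1(P,Q)$ asserted in the statement, whereas the paper only derives the inequality $\le$, which is all the chain needs.
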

\end{shaded}

\begin{proof}
We prove this by coupling. Let $\Pi(P,Q)$ denote the set of couplings of $P$ and $Q$.
For any $\pi\in\Pi(P,Q)$ define $\pi':=(S\times S)_{\#}\pi$, where $S(x)=\sqrt{\bar\alpha_t}\,x$.
Then $\pi'\in\Pi(P',Q')$ and
\begin{align} 
\int \|x'-y'\|\,\mathrm d\pi'(x',y')
& = \int \|S(x)-S(y)\|\,\mathrm d\pi(x,y) \nonumber \\
& = \sqrt{\bar\alpha_t}\int \|x-y\|\,\mathrm d\pi(x,y).
\end{align}
Since $W_1(P',Q')$ is the infimum of $\int\|x'-y'\|\,\mathrm d\tilde\pi$ over all $\tilde\pi\in\Pi(P',Q')$, we have
\begin{align} 
W_1(P',Q') \le \sqrt{\bar\alpha_t}\int \|x-y\|\,\mathrm d\pi(x,y)
\end{align}
for every \(\pi\in\Pi(P,Q)\). Taking the infimum over \(\pi\in\Pi(P,Q)\) yields
\begin{align}
\label{eq:proof_W_contraction_step1}
W_1(P',Q') \le \sqrt{\bar\alpha_t}\,W_1(P,Q).
\end{align} 
%By choosing an optimal coupling for $W_1(P,Q)$ and pushing it forward we obtain equality, so $W_1(P',Q')=\sqrt{\bar\alpha_t}W_1(P,Q)$.
Next, let $(U,V)$ be any coupling of $P'$ and $Q'$. Independently draw $Z\sim\nu$ and consider $(U+Z,V+Z)$. This is a coupling of $P'*\nu$ and $Q'*\nu$. For this coupling
\[
E\big[\|(U+Z)-(V+Z)\|\big] = E\big[\|U-V\|\big].
\]
Taking the infimum over couplings $(U,V)$ gives
\begin{align}
\label{eq:proof_W_contraction_step2}
W_1(P'*\nu,\;Q'*\nu) \le W_1(P',Q').
\end{align}

% Finally, we combine Eq.~\ref{eq:proof_W_contraction_step1}
% and Eq.~\ref{eq:proof_W_contraction_step2}. Hence
% \begin{align}
% W_1(P*\tilde\nu,\;Q*\tilde\nu)
% & = W_1\bigl((S^{-1})_{\#}(P')*\nu,\; (S^{-1})_{\#}(Q')*\nu\bigr) \nonumber \\
% & \le W_1(P',Q') = \sqrt{\bar\alpha_t}\,W_1(P,Q), \nonumber
% \end{align}
% which, when applied to $P_{tr},P_{val}$ and their diffused versions, yields
% \[
% W_1(P_{tr,t},P_{val,t}) \le \sqrt{\bar\alpha_t}\,W_1(P_{tr},P_{val}).
% \]
% This completes the proof. The proof requires only the existence of first moments to ensure the expectations are finite.

Finally, combining Eq.~\ref{eq:proof_W_contraction_step1}
and Eq.~\ref{eq:proof_W_contraction_step2}, and noting that
\[
P_t = (S)_{\#}P * \nu,\qquad Q_t = (S)_{\#}Q * \nu,
\]
we obtain
\begin{align}
W_1(P_t,Q_t)
&= W_1(P'*\nu,\;Q'*\nu) \nonumber \\
&\le W_1(P',Q') \nonumber \\
&\le \sqrt{\bar\alpha_t}\,W_1(P,Q).
\end{align}
This completes the proof.
\end{proof}
In Theorem \ref{thm:main}, we treat the diffusion discriminator as operating on a lifted input space,
and compare bounds under the assumption of comparable effective capacity
and Lipschitz control.

\begin{shaded}
    % \begin{theorem}[Generalization Error Contraction]\label{thm:main}
    \begin{theorem}[Train-validation Generalization Error Contraction]\label{thm:main}
Let $D_{adv}$ and $D_{diff}$ be the well-trained standard discriminator (trained on clean data) and the diffusion-based discriminator (trained on noisy data), respectively.
Then,  the validation generalization gaps satisfy
\begin{align}
\label{eq:generalization_theory_eq1}
\bigl|E_{x\sim P_{tr}}D_{adv}(x)-E_{x\sim P_{val}}D_{adv}(x)\bigr|
\le W_1(P_{tr},P_{val}),
\end{align}
and
\begin{align}
& \bigl|E_{x\sim P_{tr,t}}D_{diff}(x)-E_{x\sim P_{val,t}}D_{diff}(x)\bigr|  \nonumber \\
& \le W_1(P_{tr,t},P_{val,t}) \nonumber \\
& \le \sqrt{\bar\alpha_t}\,W_1(P_{tr},P_{val}).
\end{align}
That is, for $\bar\alpha_t<1$, the diffusion-trained discriminator $D_{diff}$ has a strictly tighter upper bound on its validation generalization gap than $D_{adv}$.
\end{theorem}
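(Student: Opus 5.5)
The plan is to derive both inequalities from Lemma~\ref{lem:KR} (Kantorovich--Rubinstein duality), and then chain the second one with Theorem~\ref{thm:W-contraction}. The one modelling input is the standing assumption stated before the lemma: thanks to the WGAN-GP penalties $\mathcal{L}_{gpadv}$ and $\mathcal{L}_{gpdiff}$, each well-trained discriminator is (approximately) $1$-Lipschitz in the feature argument it is evaluated on. Here that is $x$ for $D_{adv}$ and the noised feature $x_t$ for $D_{diff}$. I will treat this as exact $1$-Lipschitzness. Any other arguments (semantics, $\mathbf{v}_t$, $\mathbf{r}_0$, $t$) are held fixed, so $D_{diff}$ is viewed as a function $\mathbb R^d\to\mathbb R$ on the lifted input, as the remark before the theorem indicates.

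\textbf{Step 1: the clean bound.} Since $\|D_{adv}\|_{\mathrm{Lip}}\le 1$, both $D_{adv}$ and $-D_{adv}$ belong to the class over which the supremum in Lemma~\ref{lem:KR} is taken. Applying the lemma to $f=D_{adv}$ and to $f=-D_{adv}$ with $P=P_{tr}$, $Q=P_{val}$ gives
\[
E_{P_{tr}}D_{adv}-E_{P_{val}}D_{adv}\le W_1(P_{tr},P_{val})
\]
and the same bound for the negated difference. Together these give \eqref{eq:generalization_theory_eq1}. The finite first moments assumed for $P_{tr},P_{val}$ guarantee that all expectations are finite.

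\textbf{Step 2: the diffused bound.} I will first check that $P_{tr,t}$ and $P_{val,t}$ have finite first moments, using $E\|x_t\|\le\sqrt{\bar\alpha_t}E\|x_0\|+\sqrt{1-\bar\alpha_t}E\|z\|<\infty$. Repeating the argument of Step 1 with $D_{diff}$, $P_{tr,t}$ and $P_{val,t}$ then yields the first inequality of the second display. For the second inequality I invoke Theorem~\ref{thm:W-contraction} with $P=P_{tr}$, $Q=P_{val}$. The one point needing care is that the theorem's left side must match the diffused laws, that is, $P_{tr,t}$ must equal $(S_{\#}P_{tr})*\nu$. This holds because $x_t=S(x_0)+\sqrt{1-\bar\alpha_t}z$ with $z$ independent of $x_0$, which is exactly the identification $P_t=S_{\#}P*\nu$ made at the end of that theorem's proof. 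This gives $W_1(P_{tr,t},P_{val,t})\le\sqrt{\bar\alpha_t}\,W_1(P_{tr},P_{val})$.

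\textbf{Step 3: conclusion, and the main obstacle.} For $\bar\alpha_t<1$ we have $\sqrt{\bar\alpha_t}<1$. Hence, whenever $W_1(P_{tr},P_{val})>0$, the bound for $D_{diff}$ is strictly smaller than the bound for $D_{adv}$; if the two distances are equal to zero, both bounds vanish. Strictness is asserted only for the bounds, not for the gaps themselves.

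The main obstacle is conceptual rather than computational. The Lipschitz constants of the two discriminators must be comparable, and the gradient penalty enforces Lipschitzness only approximately and only along interpolation lines. I would handle this by stating the result for $L$-Lipschitz discriminators, with both bounds scaled by the same $L$, so the comparison still holds. I would also flag that the theorem compares upper bounds rather than the actual generalization gaps.
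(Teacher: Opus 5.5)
Your proposal is correct and follows the paper's route. Lemma~\ref{lem:KR}, applied to the (assumed) $1$-Lipschitz discriminators, gives both first inequalities, and Theorem~\ref{thm:W-contraction} supplies $W_1(P_{tr,t},P_{val,t})\le\sqrt{\bar\alpha_t}\,W_1(P_{tr},P_{val})$. Your added details are all points the paper leaves implicit: applying the duality to $\pm D$ to get the absolute value, checking first moments of the diffused laws, identifying $P_{tr,t}$ as $(S_{\#}P_{tr})*\nu$ (the law of $x_t$ and what the contraction proof actually handles, rather than the unscaled $P_{tr}*\nu$ written in the setup), and noting that strictness requires $W_1(P_{tr},P_{val})>0$.
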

\end{shaded}

\begin{proof}
% Suppose
%\begin{enumerate}
  %\item each discriminator is (approximately) $1$-Lipschitz (WGAN-GP's gradient penalty enforces this approximately);
  %\item each discriminator is trained sufficiently on its corresponding training set so that training error is negligible;
  %\item $P_{tr},P_{val}$ have finite first moments.
%\end{enumerate}

The first inequality (Eq.~\ref{eq:generalization_theory_eq1}) follows directly from Lemma~\ref{lem:KR} since $D_{adv}$ is $1$-Lipschitz:
\begin{align}
& \bigl|E_{x\sim P_{tr}}D_{adv}(x)-E_{x\sim P_{val}}D_{adv}(x)\bigr| \nonumber \\
& \le \sup_{\|f\|_{\mathrm{Lip}}\le1}\{E_{P_{tr}}f-E_{P_{val}}f\} = W_1(P_{tr},P_{val}).
\end{align}
Likewise, for $D_{diff}$,
\[
\bigl|E_{x\sim P_{tr,t}}D_{diff}(x)-E_{x\sim P_{val,t}}D_{diff}(x)\bigr|
\le W_1(P_{tr,t},P_{val,t}),
\]
and applying Theorem~\ref{thm:W-contraction} yields the stated contraction bound. This completes the proof.
\end{proof}

\begin{remark}
From the theorem, we have:
\begin{itemize}
  \item \textbf{Negligible Training Error Assumption.} In practice, discriminators do not reach exact optima. If the discriminator leaves a residual training error $\varepsilon_{\mathrm{train}}$ (i.e.\ the empirical objective differs from its ideal by $\varepsilon_{\mathrm{train}}$), then the validation error bounds above should be augmented by $\varepsilon_{\mathrm{train}}$. Concretely, if the empirical training expectation differs from the population training expectation by at most $\varepsilon_{\mathrm{train}}$, the validation bound for $D_{diff}$ becomes
  \begin{align}
  \bigl|E_{P_{val,t}}[D_{diff}] &- |E_{P_{tr,t}}[D_{diff}]\bigr| \\
  & \le \varepsilon_{\mathrm{train}} + \sqrt{\bar\alpha_t}\,W_1(P_{tr},P_{val}).
  \end{align}
  \item \textbf{Effect of Gradient Penalty.} WGAN-GP adds a penalty term of the form $\lambda E[(\|\nabla_x D(\tilde x)\|-1)^2]$ (with $\tilde x$ sampled on straight-line segments between real and fake samples) to encourage $D$ to be near 1-Lipschitz on the data region. Strictly speaking, this enforces a local Lipschitz condition; in our theoretical statement, we assume the trained discriminator is approximately $L$-Lipschitz on the relevant support. If $L\neq1$ replace $W_1$ by $L\,W_1$ in the inequalities, and the contraction factor becomes $L\sqrt{\bar\alpha_t}$.
\end{itemize}
\end{remark}

\section{Conclusion}
In this paper, we investigate the zero-shot learning (ZSL) problem under varying amounts of training data, revealing that the issue of spurious visual-semantic correlations on both seen and unseen classes is exacerbated when training samples are scarce. To enhance visual-semantic correlation, we introduce ZeroDiff++, which incorporates three key components for generator training: (1) a diffusion forward chain to augment the limited training set; (2) SC-based representations to effectively represent each limited sample; (3) mutually learned discriminators to validate generated features from multiple perspectives, including predefined semantics, contrastive representations, and diffusion processes; and two for feature generation: (4) diffusion-based test-time adaption that adapt generator on unseen classes by denoising pseudo-labeled test features; and (5) diffusion-based test-time generation allowing traceable generation and partial feature imagination via diffused test features. We also theoretically proved that diffusion-based discriminators have a better overfitting reduction effect than traditional clean-sample discriminators. Experiments conducted on three popular datasets demonstrate the data efficiency of our method. Our ZeroDiff++ significantly enhances zero-shot capacity, performing well with both abundant and limited training samples.

%{\appendices
%\section*{Proof of the First Zonklar Equation}
%Appendix one text goes here.
% You can choose not to have a title for an appendix if you want by leaving the argument blank
%\section*{Proof of the Second Zonklar Equation}
%Appendix two text goes here.}

% \section*{Acknowledgments}
% This should be a simple paragraph before the References to thank those individuals and institutions who have supported your work on this article.
 
\bibliography{TPAMI_version/ZeroDiff++_TPAMI}
\bibliographystyle{IEEEtran}

\clearpage

% {\appendix[Proof of the Zonklar Equations]
% Use $\backslash${\tt{appendix}} if you have a single appendix:
% Do not use $\backslash${\tt{section}} anymore after $\backslash${\tt{appendix}}, only $\backslash${\tt{section*}}.
% If you have multiple appendixes use $\backslash${\tt{appendices}} then use $\backslash${\tt{section}} to start each appendix.
% You must declare a $\backslash${\tt{section}} before using any $\backslash${\tt{subsection}} or using $\backslash${\tt{label}} ($\backslash${\tt{appendices}} by itself starts a section numbered zero.)}
{
% \appendix[Proof of the Zonklar Equations]

\appendices

\section{SC-based Representation}
\label{app:sc2cs}

We use t-SNE to visualize the CE-based features and SC-based representations in Fig.~\ref{fig:tsne_cs_sc_all}.
We can find every class cluster more tightly in CE-based space than in SC-based space.
It verifies previous work claims SC-based representations have larger intra-class variation.
It also means that SC-based representations contain more intra-class uncertainty.
To further verify this point, we visualize the class `fox' in Fig.~\ref{fig:tsne_cs_sc_fox}.
We can find that all instances are grouped into a single cluster in CE-based space, while different sub-classes of fox (i.e., red fox, white fox, and grey fox) could be separately grouped in SC-based space.
In other words, SC-based representations could reflect the characteristics of every instance better than CE-based features.

\begin{figure}
    \centering
    \includegraphics[width=\linewidth]{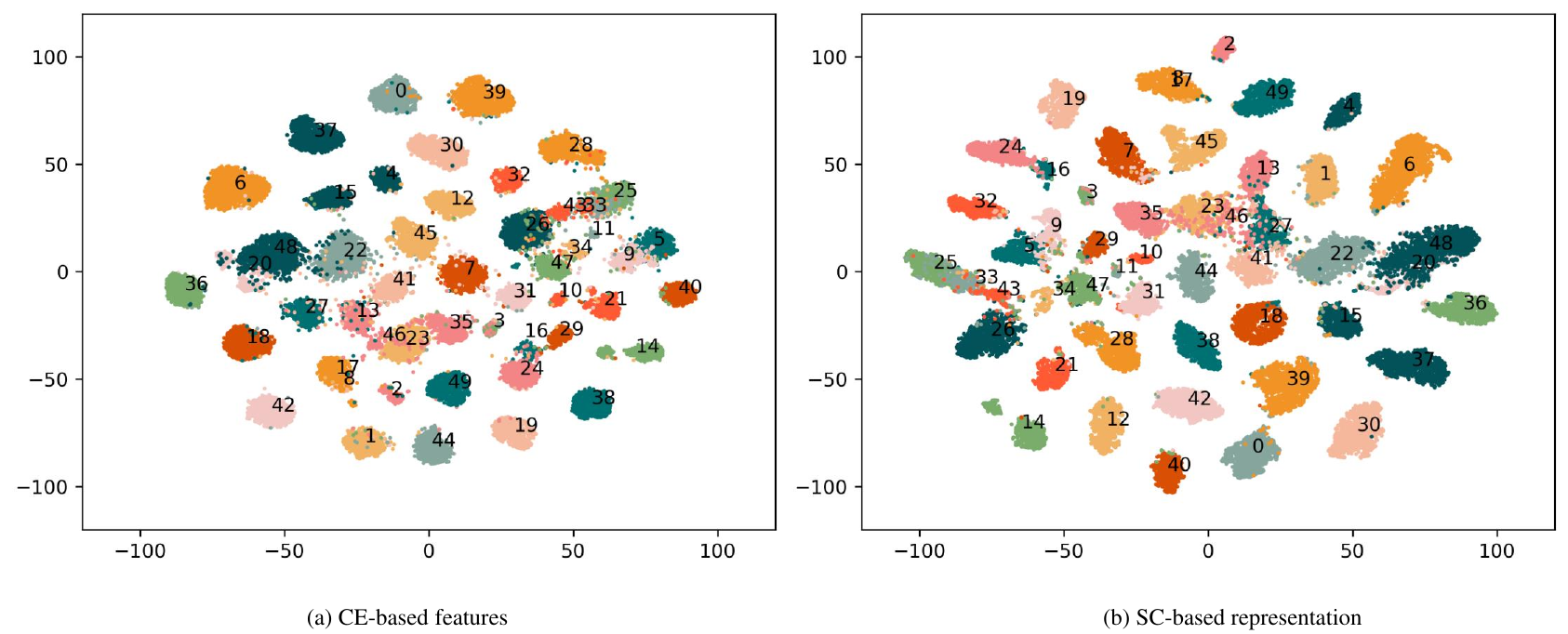}
    \caption{Comparison of CS-based features and SC-based representations with t-SNE visualization for all classes on AWA2. We can find that SC-based representations have larger intra-class variation than CE-based features.
}
    \label{fig:tsne_cs_sc_all}
\end{figure}

\begin{figure}
    \centering
    \includegraphics[width=\linewidth]{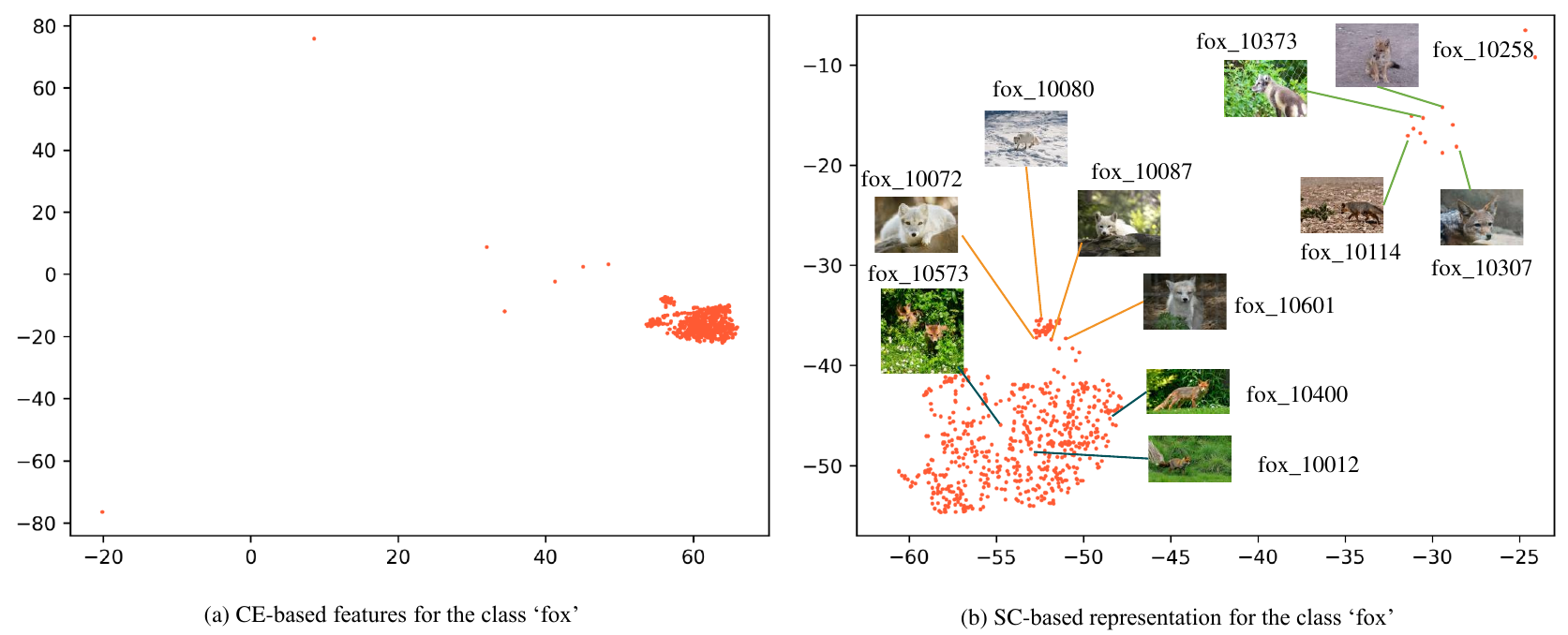}
    \caption{Comparison of CS-based features and SC-based representations with t-SNE visualization for the class `fox' on AWA2. We can find all instances of fox are clustered in a group in CE-based space while different sub-classes (i.e. red fox, white fox and grey fox) are gathered in different groups in SC-based space.
}
    \label{fig:tsne_cs_sc_fox}
\end{figure}

\section{Mutual Learning Effectiveness}
\label{app:mu_effect}
To further verify the effect of our \( \mathcal{L}_{mu} \), we designed an additional experiment to show the change in critic score.
\begin{enumerate}
    \item As shown in Fig.~\ref{fig:CS_curve} (a), we display the epoch-$\Delta^{s}_{adv}$ curve of \( D_{adv} \) (Eq.~\ref{eq:delta_adv_seen}).
    With \( \mathcal{L}_{mu} \), \( \Delta^{s}_{adv} \) decreases significantly compared to the counterpart without \( \mathcal{L}_{mu} \), suggesting that knowledge from \( D_{diff} \) helps \( D_{adv} \) reduces over-fitting in the training set.
    \item We also draw the the curve of epoch-$\Delta^{u}_{adv}$ (Eq.~\ref{eq:delta_adv_unseen}) in Fig.~\ref{fig:CS_curve} (c), showing that the potential over-fitting on unseen classes consistently reduced by our \( \mathcal{L}_{mu} \).
    \item Conversely, the knowledge from \( D_{adv} \) can also benefit \( D_{diff} \). As shown in Fig.~\ref{fig:CS_curve} (b), we display the critic score difference of \( D_{diff} \) between noised real training features \( \mathbf{v}_{t} \) and noised fake training features \( \tilde{\mathbf{v}}_{t} \), i.e., 
    \begin{align}
    \label{eq:delta_diff}
        \Delta_{diff}(\mathbf{v}_{t}, \tilde{\mathbf{v}}_{t}) &= D_{diff}(\mathbf{v}_{t},\mathbf{v}_{t+1},\mathbf{r}_{0},\mathbf{a},t) \\ \nonumber
        & - D_{diff}(\tilde{\mathbf{v}}_{t},\mathbf{v}_{t+1},\mathbf{r}_{0},\mathbf{a},t). 
    \end{align}
    After adding \( \mathcal{L}_{mu} \), \( \Delta_{diff} \) increases significantly. This indicates that the distinguishing ability of \( D_{diff} \) is enhanced by the knowledge from \( D_{adv} \).
\end{enumerate}

\begin{figure}
    \centering
    \includegraphics[width=0.95\linewidth]{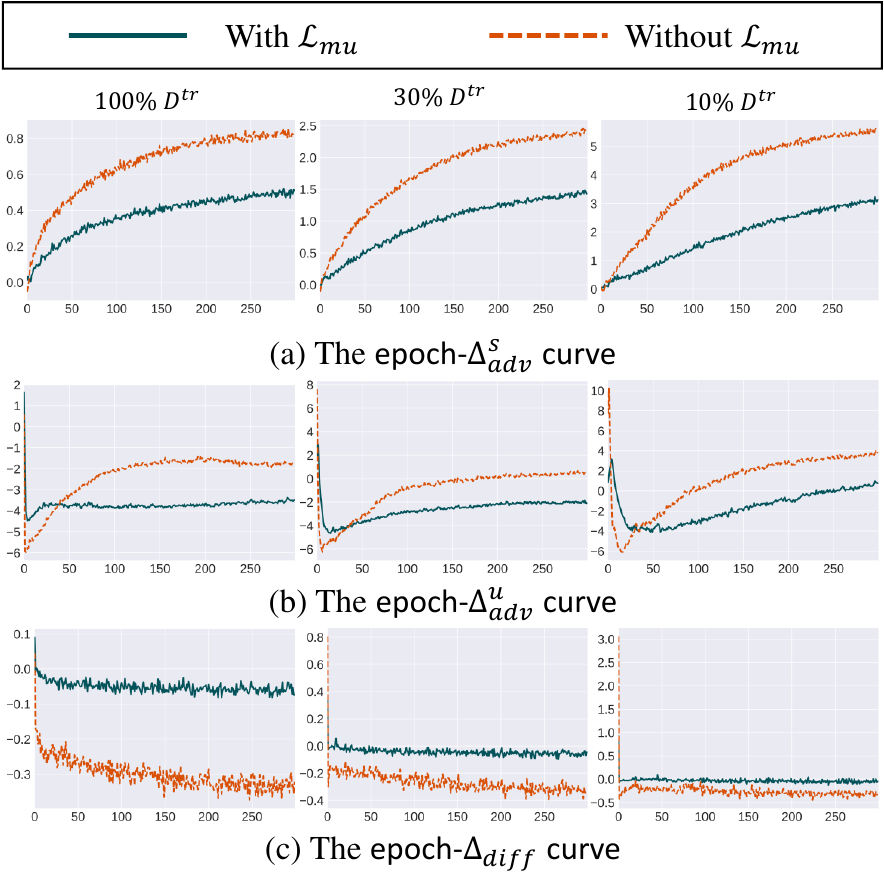}
    \caption{ The effect of $\mathcal{L}_{mu}$ to $\Delta^{s}_{adv}$ (Eq.~\ref{eq:delta_adv_seen}), $\Delta^{u}_{adv}$ (Eq.~\ref{eq:delta_adv_unseen}) and $\Delta_{diff}$ (Eq.~\ref{eq:delta_diff}) on AWA2. (a) indicates that our \( \mathcal{L}_{mu} \) mitigates the overfitting of \( D_{adv} \) in the training set. (b) exhibits that our  \( \mathcal{L}_{mu} \) enhances the correlation on unseen classes. (c) shows that the distinguishing ability of \( D_{diff} \) is enhanced by our \( \mathcal{L}_{mu} \).
 }
    \label{fig:CS_curve}
\end{figure}

}

\vfill

\end{document}